%% Exemple de source LaTeX pour un article soumis à JEP-TALN 2024
\documentclass[10pt,twoside]{article}

\usepackage{times}
\usepackage[utf8]{inputenc}
\usepackage[T1]{fontenc}
\usepackage{icomma}
\usepackage{graphicx}
\usepackage{algorithm2e}[french]
\usepackage{amsmath,amssymb,amsthm}  
\usepackage{bbm}
\usepackage{bm}  
\usepackage{graphicx}
\usepackage{ulem}
\usepackage{hyperref}
\graphicspath{ {.} }

% Faire les \usepackage dont vous avez besoin de préférence AVANT le \usepackage{jeptaln2024}

% jeptaln2024 charge le package hyperref. Pour des raisons de compatibilités, certains
% package doivent donc néanmoins être chargés après le \usepackage{jeptaln2024}, comme
% par exemple le package linguex (voir les compatibilités sur http://distrib-coffee.ipsl.jussieu.fr/pub/mirrors/ctan/macros/latex/contrib/hyperref/doc/hyperref-doc.html#x1-120005 )

% Merci de *NE PAS* modifier la géométrie (les marges) ni les fontes
% et le caractère (10pt, times)

\usepackage{jeptaln2024}
\usepackage[french]{babel} %frenchb deprecated, use french instead

\RestyleAlgo{ruled}

% French keywords:
%
\SetKwRepeat{Do}{do}{Tant que}
\SetKwComment{Comment}{/* }{ */}
\SetKw{Or}{\itshape ou}
\SetKw{And}{\itshape et}
\SetKwInput{KwRes}{R\'esultat}%
\SetKwIF{If}{ElseIf}{Else}{si}{alors}{sinon si}{sinon}{fin si}
\SetKwFor{While}{tant que}{faire}{fin tq}%
\SetKwFor{For}{pour}{faire}{fin pour}%
\SetKwInput{KwResult}{Résultat}%
\SetKwInput{KwData}{Données}%
\SetKw{KwRet}{retroune}%
\SetKw{Return}{retourne}%

\DeclareMathOperator*{\argmax}{arg\,max}

\newcommand{\mlevt}[1]{\texttt{TM$^{#1}$-LevT}}

\usepackage[colorinlistoftodos]{todonotes}

 % or disable

\newtheorem{definition}{Définition}
\newtheorem{lemma}{Lemme}
\newtheorem{theorem}{Théorème}

% Titre complet
% \title{Paradigme de couverture pour la traduction avec le transformeur multi-Levenshtein}
\title{Optimiser le choix des exemples pour la traduction automatique augmentée par des mémoires de traduction}

%%%%%%%%%%%%%%%%%%% DECOMMENTER POUR APRES
% \author{Maxime Bouthors\up{1, 2}\quad Josep Crego\up{2} \quad François Yvon\up{1}\\
%   {\small
%     (1) Sorbonne Université, CNRS, Institut des Systèmes Intelligents et de Robotique, ISIR, F-75005 Paris, France \\ 
%     (2) ChapsVision, 4 Rue du Port aux Vins, 92150 Suresnes, France \\ 
%     \texttt{
%       prénom.nom@isir.upmc.fr, pnom@chapsvision.com \\ 
% }}}

\author{Maxime Bouthors\up{1, 2}\quad Josep Crego\up{1}\quad François Yvon\up{2}\\
  {\small
    (1) Sorbonne Université, CNRS, ISIR, F-75005 Paris, France \\ 
    (2) ChapsVision, 92150, Suresnes \\ 
    %(3) Lab, adresse, CP Ville, Pays \\ 
    \texttt{
      bouthors@isir.upmc.fr, jcrego@chapsvision.com, yvon@isir.upmc.fr \\ 
    }}}

\begin{document}
\maketitle
% \fydone{Soumission anonyme !!!}
\resume{%
	La traduction neuronale à partir d'exemples s'appuie sur l'exploitation d'une mémoire de traduction contenant des exemples similaires aux phrases à traduire. Ces exemples sont utilisés pour conditionner les prédictions d'un décodeur neuronal. Nous nous intéressons à l'amélioration du système qui effectue l'étape de recherche des phrases similaires, l'architecture du décodeur neuronal étant fixée et reposant ici sur un modèle explicite d'édition, le Transformeur \og multi-Levenshtein \fg{}. Le problème considéré consiste à trouver un ensemble optimal d'exemples similaires, c'est-à-dire qui couvre maximalement la phrase source. En nous appuyant sur la théorie des fonctions sous-modulaires, nous explorons de nouveaux algorithmes pour optimiser cette couverture et évaluons les améliorations de performances auxquelles ils mènent pour la tâche de traduction automatique.
}%
% Coverage-based example retrieval for Multi-Levenshtein Transformer
% Optimizing example selection for retrieval-augmented machine translation with translation memories

\abstract{Optimizing example selection for retrieval-augmented machine translation with translation memories}{
	Retrieval-augmented machine translation leverages examples from a translation memory by retrieving similar instances. These examples are used to condition the predictions of a neural decoder. We aim to improve the upstream retrieval step and consider a fixed downstream edit-based model: the multi-Levenshtein Transformer. The task consists of finding a set of examples that maximizes the overall coverage of the source sentence. To this end, we rely on the theory of submodular functions and explore new algorithms to optimize this coverage. We evaluate the resulting performance gains for the machine translation task.\\
}

\motsClefs
{Traduction Automatique, Recherche d'Information, Mémoires de Traduction, Fonctions Sous-Modulaires, Traduction à partir d'Exemples}
{Machine Translation, Information Retrieval, Translation Memories, Submodularity, Example-based Translation}

\section{Introduction}
De nombreux travaux récents s'intéressent à la génération augmentée par des exemples \citep{li-etal-2023-survey}. En traduction, l'utilisation d'exemples remonte aux méthodes de traduction assistée par ordinateur par des traducteurs professionnels \citep{bowker-2002-computer}: éditer des segments très similaires à la phrase de référence permet d'accélérer la traduction. Cette idée est au fondement des méthodes basées sur des exemples \cite{nagao-1984-framework,somers-1999-review,carl-etal-2004-recent}. Elle est adaptée aux systèmes statistiques par \cite{koehn-senellart-2010-convergence} et plus récemment aux méthodes neuronales.

Il existe en fait de nombreuses manières d'exploiter les exemples :
la traduction conditionnelle qui introduit un système d'attention sur les exemples \citep{gu-etal-2018-search,bulte-tezcan-2019-neural,hoang-etal-2022-improving}; l'affinage \og léger\fg{} sur un ensemble d'exemples pour faire de la micro-adaptation \citep{farajian-etal-2017-multi}; les méthodes intégrant des exemples dans le contexte de grands modèles de langue (LLM) génératifs multilingues (\citet{moslem-etal-2023-adaptive}, \textsl{inter alia}); l'édition directe du meilleur exemple similaire \citep{gu-etal-2019-levenshtein}. Nous discutons ces études dans la section~\ref{sec:related-work}.

Ici, nous nous intéressons au transformeur \og multi-Levenshtein\fg{} \citep{bouthors-etal-2023-towards}, un modèle d'édition qui combine $k (\geq 1)$ exemples pour calculer une traduction. Cette caractéristique le rend sensible à la qualité des exemples récupérés. En particulier, supposant fixé $k$ le nombre de phrases à récupérer, nous cherchons à répondre à la question : comment identifier un ensemble optimal de $k$ exemples ? Trouver exactement ce meilleur ensemble conditionnellement au modèle et à la phrase source est difficile, ce qui implique de considérer des heuristiques. Pour les construire, nous faisons l'hypothèse qu'un ensemble de phrases parallèles couvrant (en source) la phrase à traduire fournit des exemples couvrant (en cible) la traduction à produire. En dépit de ses limites, liées à des phénomènes linguistiques bien connus (variation lexicale, divergences morphologiques ou syntaxiques entre langues source et cible, etc.), cette hypothèse est adoptée dans les travaux de l'état-de-l'art.
% \fydone{Deux remarques (a) peut on valider cette hypothèse sur un dev ? (b) Pourrait on imaginer aussi de fixer un niveau de couverture en faisant varier $k$ ?}

% Cette hypothèse es puisque le transformeur multi-Levenshtein maximise la couverture de la référence lors de son entraînement, il semble naturel de lui confier un ensemble de phrases dont le côté source couvre optimalement la phrase à traduire. Cela correspond à faire implique en général la couverture de la référence cible. 
Notre contribution principale est alors l'étude de plusieurs manières de définir la notion de couverture et de rechercher des $k$ meilleurs exemples dans une mémoire de traduction. En tirant parti de la théorie des fonctions sous-modulaires, dont une sous-classe correspond à une notion très générique de couverture, nous analysons dans un cadre unifié les avantages comparés de ces différentes propositions, et évaluons leur impact dans une tâche de traduction multidomaines.

\section{Travaux Connexes}
\label{sec:related-work}

De nombreux efforts pour intégrer des exemples dans la génération de textes ont été menés ces dernières années \cite{li-etal-2023-survey}.
Au-delà des améliorations de performances, la possibilité de présenter aux utilisateurs améliore la transparence des décisions qui sont prises \citep{rudin-cynthia-2019-stop}. En traduction automatique, les exemples récupérés d'une mémoire de traduction sont fournis au modèle comme un contexte supplémentaire, par exemple en concaténant le côté cible des exemples au texte source \cite{bulte-tezcan-2019-neural}, ou en tirant parti à la fois de la source et de la cible \cite{pham-etal-2020-priming}. \citet{gu-etal-2018-search,xia-etal-2019-graph,he-etal-2021-fast} considèrent des stratégies plus sophistiquées pour enrichir le contexte source.

Si beaucoup de travaux se limitent à considérer les $k \ge 1$ exemples les plus similaires, \citet{cheng-etal-2022-neural,agrawal-etal-2023-context,sia-and-duh-2023-incontext} cherchent à trouver un ensemble d'exemples complémentaires entre eux. Le premier travail utilise l'algorithme de \textit{Maximum Marginal Relevance} (MMR) \citep{goldstein-carbonell-1998-summarization}, tandis que les deux autres proposent une forme de maximisation de couverture. \citet{gupta-etal-2023-coverage} donne une formulation générale du problème de couverture, l'appliquant à l'apprentissage en contexte (\textsl{in context learning}) sur des tâches diverses.

Une autre extension de cette approche exploite des corpus monolingues, en recherchant les exemples directement dans la langue cible. \citet{cai-etal-2021-neural} proposent un modèle de recherche et de traduction unique entraîné de bout-en-bout dont la procédure de recherche translingue est optimisée pour retourner des exemples utiles pour la Traduction Automatique (TA). 

La plupart de ces travaux reposent sur des modèles de génération auto-régressifs (AR), impliquant que les exemples intégrés au contexte n'ont qu'un effet indirect sur la  sortie. L'utilisation d'une mémoire de traduction avec un décodeur non auto-régressif (NAR) est étudiée par \citet{niwa-etal-2022-nearest,xu-etal-2023-integrating,zheng-etal-2023-towards} qui adaptent le transformeur de Levenshtein \cite{gu-etal-2019-levenshtein} pour éditer directement l'exemple le plus similaire en une traduction de la phrase source. \citet{bouthors-etal-2023-towards} étendent cette technique à plusieurs exemples.

Une autre approche consiste à utiliser des similarités au niveau des contextes de génération, c'est-à-dire d'états cachés du décodage des mémoires de traduction, plutôt qu'au niveau des phrases \citep{zhang-etal-2018-guiding}. \citet{he-etal-2021-efficient,khandelwal_nearest_2021}, entre autres, utilisent des méthodes de plus proches voisins sur des contextes. La prédiction du token suivant est alors guidée par ceux trouvés dans des contextes proches. Diverses extensions sont apportées par \citet{zheng-etal-2021-adaptive,meng-etal-2022-fast,martins-etal-2022-chunkbased}.

Il est enfin difficile d'ignorer l'essor des grands modèles de langue multilingues (LLM) qui, amorcés par un contexte contenant une description de la tâche à accomplir et des exemples, peuvent générer des traductions de qualité. Cette approche a été testée sur de nombreux LLM dans le but de mettre en évidence leur capacité à traiter de multiples tâches. Pour ce qui concerne la TA, plusieurs travaux étudient l'impact du contexte d'entrée, en cherchant à optimiser le nombre d'exemples et leur sélection \citep{vilar-etal-2023-prompting,zhang-et-al-2023-prompting,hendy-etal-2023-howgood,bawden-yvon-2023-investigating}.
Voir également sur ces questions \citep{moslem-etal-2023-adaptive,mu-etal-2023-augmenting,agrawal-etal-2023-context,sia-and-duh-2023-incontext,m-etal-2023-ctqscorer}.

\section{Le modèle \og multi-Levenshtein\fg{}}

% Si le Transformeur de Levenshtein de base ne récupère qu'un seul exemple en entrée, son extension de Multi-Levenshtein en a un nombre arbitraire $k$.

\begin{figure}[ht!]
	\centering
	\includegraphics[scale=0.6]{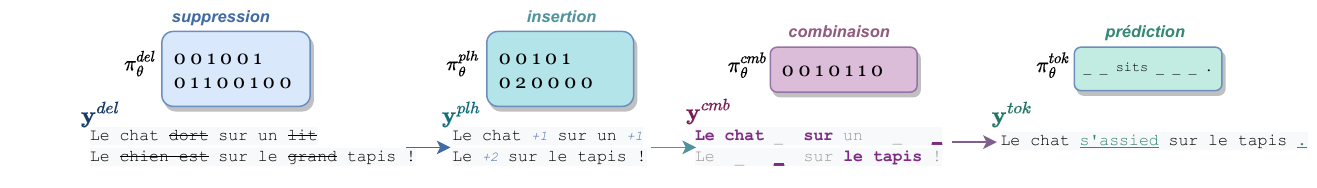}
	\caption{Première étape de décodage de \mlevt{k}. Les deux exemples qui sont édités sont $y_1$: \og le chat dort sur un lit\fg{} et $y_2$:\og le chien est sur le grand tapis\fg. Les insertions prédites à l'étape~2 (insertion) sont représentées par des entiers, puis matérialisées par des '\_' à l'étape 3 (combinaison).}
	\label{fig:edition_steps_inference}
\end{figure}

Nous nous intéressons au modèle du transformeur multi-Levenshtein, \mlevt{k}, \citep{bouthors-etal-2023-towards}, qui repose sur un modèle explicite d'édition d'exemples. L'algorithme de décodage est représenté sur la Figure~\ref{fig:edition_steps_inference}. Il s'agit d'un modèle d'édition type transformeur \citep{vaswani-etal-2017-attention} qui prend en entrée $k$ couples de phrases exemples cibles $\{y_1, \dots, y_k\}$, et les édite conditionnellement à la source $x$ en quatre étapes :
\begin{enumerate}
	\item \textbf{Suppression} : pour chaque exemple cible $y_i$, délétion de tokens;
	\item \textbf{Insertion} : pour chaque exemple cible $y_i$, insertion de tokens vides \verb|PLH| entre chaque token;
	\item \textbf{Combinaison} : combinaison des $k$ exemples cibles en s'appuyant sur un multi-alignement qui permet de déterminer, pour chaque position, l'origine (parmi $y_1 \dots y_k$) du token à conserver;
	% \maxsmalltodo{le multi-alignment se fait à l'insertion !!! la combinaison ne fait q'un choix de position.}
	\item \textbf{Prédiction} : pour chaque position comprenant un \verb|PLH|, prédiction du mot à insérer.
\end{enumerate}
Le calcul des associations entrées/sorties est réalisé par une architecture encodeur-décodeur non-autorégressive, dans lequel on remplace la couche de prédiction de mots habituelle par  quatre couches linéaires (une pour chaque opération) qui projettent les représentations latentes sur l'ensemble des opérations possibles. Par exemple, pour l'insertion (étape~2), pour une source $x$ et des exemples $y_1, \cdots, y_k$ :
\begin{equation}
	\texttt{insertion}^* = \argmax \texttt{Insertion}(\texttt{Décodeur}(\texttt{Encodeur}(x), y_1, \cdots, y_k))
\end{equation}

Le modèle est entraîné par apprentissage par imitation \cite{daume-etal-2009-search,ross-etal-2011-reduction}, en utilisant comme politique experte\footnote{En apprentissage par imitation, la politique \og experte\fg{} est celle que l'apprenti cherche à reproduire \cite{knyazeva-etal-2018-les}: elle indique ici les actions optimales qu'il faut effectuer pour éditer les exemples et générer la traduction de référence. Comme elle n'est pas observée dans les données d'apprentissage, il faut la calculer en s'appuyant sur diverses heuristiques.} la série d'opérations qui maximise la copie des tokens qui sont à la fois présents dans les exemples d'entrée et dans la référence. Autrement dit, cette politique optimale s'appuie sur une notion de couverture optimale. Pour déterminer cette politique, un algorithme d'alignement calcule la manière optimale de faire correspondre les exemples et la référence.

On se reportera à \citep{bouthors-etal-2023-towards} pour une présentation détaillée de cette architecture de base et de diverses extensions (réalignement, pré-apprentissage) qui lui permettent de tirer efficacement parti de plusieurs exemples similaires. L'essentiel étant de noter que dans son principe même, cette architecture est particulièrement sensible aux exemples qui sont fournis en entrée du système.
% Enfin, un algorithme de réalignement avant l'étape de combinaison garantit que la combinaison se fasse correctement. Et une étape de pré-entraînement sur des données synthétiques améliore les performances.

\section{Recherche d'Information dans une Mémoire de Traduction \label{sec:ritm}}

\subsection{Recherche de Phrases Similaires (RPS)}

Supposons que l'on ait accès à une mémoire de traduction $\mathcal D = \{(x_1, y_1), \cdots (x_N, y_N)\}$ et soit $x$ une phrase source que l'on cherche à traduire. Le cadre classique consiste à évaluer indépendamment chaque candidat selon un score de similarité $s(x_i, x)$ et récupérer les $k$ exemples les plus similaires.
$s$ peut être un score lexical (similarité de Jaccard, TF-IDF, BM25, distance d'édition, rappel n-gramme, etc.) ou sémantique (similarité cosinus entre deux plongements). Les premiers (Jaccard, BM25) reposent sur des algorithmes simples et servent souvent à filtrer un premier ensemble $T$ de candidats, que l'on peut ensuite évaluer avec des fonctions de comparaison plus sophistiquées.
% Puisque multi-Levenshtein se base sur un système de copie explicite, le choix lexical semble évident.

Cependant, récupérer les $k$ meilleurs candidats peut s'avérer sous-optimal, par exemple lorsque tous ces exemples sont très similaires entre eux. Pour évaluer \emph{globalement} l'ensemble des candidats retournés par la RPS, des stratégies visant à introduire de la diversité (par exemple via l'algorithme de \textsl{Maximum Marginal Relevance}, MMR) ou des contraintes de couverture peuvent alors être déployées. Nous considérons la deuxième stratégie, la première étant documentée dans \citep{cheng-etal-2022-neural}.

\subsection{Fonctions sous-modulaires et couverture}

Par analogie aux formalisations développées dans un cadre de recherche d'information pour trouver des résultats variés \cite{lin-bilmes-2011-class,krause-golovin-2014-submodular}, nous nous appuyons sur la théorie des fonctions sous-modulaires pour formaliser la RPS basée sur la maximisation de la couverture de la phrase source. Nous commençons par rappeler quelques définitions, avant de présenter les algorithmes de sélection d'exemples.

\subsubsection{Définitions}

\begin{definition}[Sous-modularité]
	Soient $\Omega$ un ensemble et $f : 2^\Omega \to \mathbb{R}$,
	$f$ est \textbf{sous-modulaire} si $\forall X, Y$ tels que $X \subset Y \subset \Omega, \forall z \in \Omega \setminus Y$ :
	\[
		f(X \cup \{z\}) - f(X) \geq f(Y \cup \{z\}) - f(Y)
	\]
\end{definition}
Intuitivement, cette définition exprime que le rendement marginal de $f$ est décroissant : plus l'ensemble en entrée de $f$ est grand, plus les incréments de $f$ induits par l'ajout de nouveaux éléments sont faibles. Une classe de fonctions sous-modulaires bien documentée est la classe des fonctions de couverture pondérée \cite{krause-golovin-2014-submodular}.
\begin{definition}[Couverture pondérée]
	\label{def:couv-ponderee}
	Soit $N \in \mathbb{N}$ et $v^{(n)}(z)_{n \in [1, N]}$ une séquence de poids réels associée à $z \in \Omega$, correspondant à des \emph{aspects} (de $x$): $v^{(n)}(z)$ évalue à quel point $z$ couvre l'\emph{aspects}~$n$. Une \textbf{fonction de couverture pondérée} associe à un sous-ensemble $Z$ de $\Omega$:
	\[
		f(Z) = \sum_{n=1}^N \max_{z \in Z} v^{(n)}(z)
	\]
\end{definition}

Dans notre application, $\Omega$ est un ensemble d'exemples (source et références jointes) et $N$ dénombre des \emph{aspects} importants de la source $x$ qu'il faut couvrir. Cet ensemble d'\emph{aspects} peut être le sac-de-mots associé à $x=(x_1, \cdots, x_N)$, les indices de la séquence, l'ensemble des n-grammes ou des sous-arbres syntaxiques de taille bornée, etc. L'objectif est ensuite de trouver un ensemble $Z$ de taille $k$ qui maximise $f(Z)$, c.-à-d. qui garantit une couverture maximale pour chaque \emph{aspect}.

Dans cette définition, le choix d'un opérateur $\max$ s'appliquant uniformément à tous les \emph{aspects} est problématique et peut conduire à récupérer des phrases dans $Z$ qui n'ont que peu de pertinence pour la TA, voir annexe~\ref{appendix:illustration}.
En traduction, il est en effet plus important de couvrir certains \emph{aspects} que d'autres (par exemple des lexèmes rares dans une représentation sac-de-mots). Pour pallier ce problème, nous introduisons une nouvelle fonction sous-modulaire.

\begin{definition}[Couverture pondérée lissée]
	Soit $N \in \mathbb{N}$, $Z = \{z_1, \cdots, z_{|Z|}\}$ et $(v^{(n)}_i)_{n \in [1, N]}$ une séquence de poids réels pour $z_i \in Z$. Une \textbf{fonction de couverture pondérée lissée} de paramètre $\lambda \in [0, 1]$ est définie par: \vspace{-2em}
	\begin{align}
          \label{eq:definition-couverture-lambda}
          f(Z) = \sum_{n=1}^N \sum_{j = 1}^{|Z|} \lambda^{j-1} v^{(n)}_{g^{(n)}(j)},
          \end{align}
	avec $g^{(n)}$ une permutation telle que $i < j \Rightarrow v^{(n)}_{g^{(n)}(i)} \geq v^{(n)}_{g^{(n)}(j)}$, ordonnant les $z_i$ selon les $v^{(n)}_i$.
\end{definition}
% \fydone{v dépendait de (z), ca doit manquer. Dans tes notations, les éléments de Z vont jusqu'à $|Z|$, non ?}
La preuve de sa sous-modularité est en Annexe~\ref{appendix:preuve-sous-modularite}. Avec cette nouvelle définition, un \emph{aspects} $n$ déjà couvert continue de contribuer au calcul de $f$, mais avec un coefficient qui diminue exponentiellement. Pour $\lambda = 0$, $f$ correspond à la définition~\eqref{def:couv-ponderee} et, pour $\lambda = 1$, la fonction devient modulaire et sa maximisation revient à choisir indépendamment les $k$ exemples les plus couvrants.
% \fydone{En inversant la double somme, on doit maximiser des sommes de termes indépendants}

\subsubsection{Maximisation de la couverture}

Maximiser $f$ pour $\lambda < 1$ est NP-complet \cite{krause-golovin-2014-submodular}. On ne connaît pas d'algorithme exact meilleur que l'énumération exhaustive.
L'algorithme glouton~\ref{algo:greedy-submodular} est la façon standard de maximiser une fonction sous-modulaire. Cependant, la combinaison linéaire figurant dans la définition de $f$ à l'équation~\eqref{eq:definition-couverture-lambda} devant être recalculée pour chaque candidat restant $z$, cet algorithme a une complexité temporelle $O(k^2N|T| + N|T|\log |T|)$ ainsi qu'un coût spatial lié aux permutations de tri $g$.% \fydone{précalculées ?}

Pour améliorer la complexité, nous considérons l'algorithme~\ref{algo:w-greedy-submodular-approx} d'\citet{agrawal-etal-2023-context}, replacé ici dans le cadre de la théorie des fonctions sous-modulaires afin de garantir une borne inférieure.

\begin{algorithm}[H]
	\caption{Maximisation gloutonne d'une fonction sous-modulaire monotone}\label{algo:greedy-submodular}
	\KwData{source $x$, fonction sous-modulaire (de couverture de pondérée) $f$, ensemble de candidats $T$, nombre d'exemples souhaité $k$.}
	\KwResult{$Z$}

	$Z \gets \emptyset$ \;
	\While{$|Z| < k$} {
		$z^* \gets \underset{z \in T \setminus Z}{\argmax} f(Z \cup \{z\})$
		\;
		% \tcc*{séléctionner le candidat qui augmente le plus $f$}
		$Z \gets Z \cup \{ z^* \}$ \;
	}
	\Return{$Z$} \;
\end{algorithm}
% \maxsmalltodo{Vérifier que T est consistant!!}

\begin{algorithm}[H]
	\caption{Maximisation gloutonne d'une fonction de couverture pondérée lissée}\label{algo:w-greedy-submodular-approx}
	\KwData{source $x$, fonction sous-modulaire de couverture de $x$ pondérée lissée $f$ de facteur de sous-pondération $\lambda$, poids de couverture $v^{(n)}(z)$ pour $z\in T$ ensemble de candidats, nombre d'exemples souhaité $k$.}
	% \KwData{source $x$, salient espacts of $x$: $S$, candidates source set $T$, number of examples $k$, down-weight factor $\lambda$, coverage score $c$}
	\KwResult{$Z$}

	$Z \gets \emptyset$ \;
	$W \gets (1, \dots, 1)$ \tcc*{$|W| = N$}

	\While{$|Z| < k$} {
		$z^* \gets \underset{z \in T \setminus Z}{\argmax} ~ W^T v(z)$ \tcc*{séléctionner $z^*$}
		$Z \gets Z \cup \{ z^* \}$ \;
		$I^* \gets \{ n : v^{(n)}(z^*) > 0 \}$ \tcc*{aspects couverts par $z^*$}

		\For{$n \in I^*$}{
			$W_n \gets \lambda W_n$ \tcc*{sous-pondération des éléments couverts}
		}

		\If{$\lambda = 0$ \And $W = (0, \dots, 0)$}{
			$W \gets (1, \dots, 1)$ \tcc*{réinitialiser $W$}
		}
	}
	\Return{$Z$} \;
\end{algorithm}

L'algorithme~\ref{algo:w-greedy-submodular-approx} a une complexité temporelle $O(kN|T|)$. Il se rapproche de l'algorithme~\ref{algo:greedy-submodular}, à la différence près que les $v^{(n)}_i$ ne sont pas triés.
% \fydone{Sens d'arrivée ?}
Nous montrons en annexe~\ref{appendix:preuve-borne} que l'écart entre la solution retournée par l'algorithme~\ref{algo:w-greedy-submodular-approx} et la solution optimale est borné.
% \fydone{Ca va ?}

\section{Cadre Expérimental \label{sec:experiments}}

\subsection{Données}

Nous considérons des données anglais-français sur un panel de 6 corpus de domaine varié : ECB, EMEA, Europarl, JRC-Acquis, Ubuntu, Wikipedia\footnote{Les données sont en libre accès sur le site opus \href{https://opus.nlpl.eu}{https://opus.nlpl.eu}}. Cela correspond à un sous-ensemble des données utilisées par \citet{xu-etal-2023-integrating} dont nous reprenons la même partition entraînement/test.
Une caractéristique des données de test est qu'elles ont été partitionnées en deux ensembles de $1000$ phrases pour chaque domaine. Le premier ensemble (\textit{test-0.4}) contient des phrases sources pour lesquelles le plus proche voisin dans la TM (au sens de la similarité de Levenshtein\footnote{Définie pour deux chaînes $x$, $y$ par $1-\frac{d(x, y)}{\max |x|,|y|}$, avec $d$ la distance de Levenshtein.}) est à une distance entre $0,4$ et $0,6$. Pour le second (\textit{test-0.6}), le plus proche voisin a un score d'au moins $0,6$. Cela permet de différencier les comportements entre les échantillons avec des \textit{\og bonnes\fg{}} correspondances, et ceux avec des correspondances \textit{\og médiocres\fg{}}. Le tableau~\ref{tab:data-stats} résume les principales statistiques de ces corpus.

\begin{table*}[ht]
	\centering
	\scalebox{0.9}{
		\begin{tabular}{lrrrrrrrrrrrr}
			\hline
			domaine          & ECB  & EME  & Epp  & JRC  & Ubu & Wiki & \textbf{tout} \\ \hline
			taille           & 195k & 373k & 2,0M & 503k & 9k  & 803k & 3,9M         \\
			longueur moyenne & 29,2 & 16,7 & 26,6 & 28,8 & 5,2 & 19,6 & 21,0         \\
		\end{tabular}
	}
	\caption{\label{tab:data-stats}Nombre d'échantillons et longueur moyenne des phrases d'entraînement.}
\end{table*}

\subsection{Scores de Couverture}

Dans un premier temps, nous utilisons notre propre implémentation de BM25 \citep{robertson-and-sparck-1976-relevance} pour récupérer les $T=100$ meilleurs candidats. Ensuite, nous considérons plusieurs fonctions de couverture avec des scores et pondérations différents.

\textbf{Couverture sac-de-mot (SDM)} : Les \emph{aspects} sont les termes sac-de-mot $t_n$ de la source $x$ et le poids $v^{(n)}(z)$ correspond au minimum du nombre d'occurrences de $t_n$ dans le candidat $z$ et dans $x$. Cette notion correspond au \emph{rappel modifié}\footnote{"Modified recall", par analogie à la précision modifiée du score BLEU.} : les termes ne peuvent pas être couverts plus que leur nombre d'occurrences dans la source.

\textbf{Couverture 4-gramme ou moins (NGM)} : Les \emph{aspects} sont les 1-4-grammes $t_n$ de $x$, et $v^{(n)}(z)$ est le minimum du nombre d'occurrences dans $x$ et dans $z$.

\textbf{Couverture par distance de Levenshtein (DL)} : Les \emph{aspects} sont les indices de $x$, et $v^{(n)}(z)$ vaut $1$ ($0$ sinon) si et seulement si $x_n$ appartient à une sous-chaîne copiée en calculant la distance de Levenshtein entre $x$ et $z$. Puisqu'il peut exister plusieurs alignements optimaux, on peut soit calculer l'ensemble des sous-chaînes optimales et marginaliser pour chaque indice, soit échantillonner une solution et l'utiliser pour construire les $v^{(n)}(z)$.

Les $v^{(n)}(z)$ sont normalisés de deux manières différentes :

\textbf{Normalisation par cardinalité} : Pour SDM et NGM $v^{(n)}(z)$ est divisé par $N$ le nombre d'\emph{aspects}. Pour DL, on choisit de normaliser par le maximum de la taille de $x$ et $z$ afin de retrouver la formule classique quand $\lambda=1$.

\textbf{Normalisation par rareté (IDF)} : Chaque valeur $v^{(n)}(z)$ est normalisée avec des pondérations IDF, c.-à-d. que, pour SDM et DL, chaque mot $w$ de $x$ reçoit un poids $\operatorname{IDF}(w) /\sum_{w' \in x} \operatorname{IDF}(w)$. Pour NGM, l'indexation des n-grammes jusqu'à l'ordre 4 étant coûteuse, nous avons par simplification remplacé l'IDF de chaque n-gramme par la moyenne des IDF des termes qu'il contient.

Enfin, dans le cadre de l'introduction des fonctions de \textit{couverture pondérée lissées}, nous étudions différentes valeurs de $\lambda \in \{0; 0,2; 0,5; 1\}$.

\textbf{Recherche constrastive} Nous comparons aussi nos résultats à MMR \citep{cheng-etal-2022-neural} avec $\alpha=0,3$ qui fait un compromis entre pertinence et diversité des exemples.

\subsection{Métriques}

Nous calculons les scores BLEU \citep{papineni-etal-2002-bleu} avec SacreBLEU \citep{post-2018-call}\footnote{signature: \texttt{nrefs:1|case:mixed|eff:no|tok:13a|smooth:exp|version:2.1.0};}.
Nous étudions également trois métriques calculées en comparant les phrases cibles récupérées à la référence : la \emph{couverture}, la \emph{pertinence} et la \emph{longueur} des phrases. La couverture est calculée selon un rappel modifié, comme pour la fonction sous-modulaire unigramme avec $\lambda = 0$ sur les phrases tokénisées\footnote{Nous utilisons les scripts de Moses (\href{https://github.com/moses-smt/}{https://github.com/moses-smt/})}. La pertinence est la précision sac-de-mot, c.-à-d. la proportion de termes utiles dans les exemples rapportée à la somme des longueurs des exemples. La longueur est calculée sur les phrases tokenisées.

\section{Résultats \label{sec:resultats}}

\paragraph{Rôle du score choisi :} Nous effectuons une recherche pour les scores SDM, NGM et DL avec normalisation cardinale et $\lambda \in \{0; 0,2; 1\}$.
Les histogrammes des scores BLEU de la figure~\ref{fig:score-bleu} montrent :
% \begin{enumerate}
% 	\item La supériorité de la distance de Levenshtein (DL) sur les deux autres scores, même si NGM reste compétitif sur certains domaines (voir le tableau~\ref{tab:bleu-scores} en Annexe pour les résultats par domaine).
% 	\item Maximiser la couverture avec $\lambda=0$ est en moyenne préférable (même si $\lambda=0.5$ est mieux encore, voir la suite). En effet, on obtient un gain de +0.1 (resp. +0.2) pour \textit{test-0.4} (resp. \textit{test-0.6}). NGM de \textit{test-0.6} semble une exception.
% \end{enumerate}
(1) La supériorité de la distance de Levenshtein (DL) sur les deux autres scores, même si NGM reste compétitif sur certains domaines (voir tableau~\ref{tab:all-bleu-scores} dans l'annexe~\ref{appendix:complement} pour les résultats par domaine);
(2) À l'exception de NGM-0 (\textit{test-0.6}) maximiser la couverture avec $\lambda=0$ est en moyenne préférable et conduit à un gain de +0,1 (resp.\ +0,2) pour \textit{test-0.4} (resp. \textit{test-0.6}) par rapport à $\lambda=1$.
% Le tableau~\ref{tab:bleu-scores} montre :
% \begin{enumerate}
% 	\item La supériorité de la distance de Levenshtein (DL) sur les deux autres scores, même si NGM reste compétitif sur certains domaines.
% 	\item Maximiser la couverture avec $\lambda=0$ est en moyenne préférable (même si $\lambda=0.5$ est mieux encore, voire la suite). En effet, on obtient un gain de +0.1 (resp. +0.2) pour \textit{test-0.4} (resp. \textit{test-0.6}).
% 	\item Les domaines offrent une grande variabilité dont il est difficile d'extraire des tendances. Notons cependant que Europarl et Wikipedia ne bénéficient pas d'une plus grande couverture car ce sont des cas où $\lambda=1$ qui donne le meilleur BLEU. C'est le contraire pour ECB et JRC-Aquis.
% \end{enumerate}

\begin{figure}
	\includegraphics[width=\textwidth]{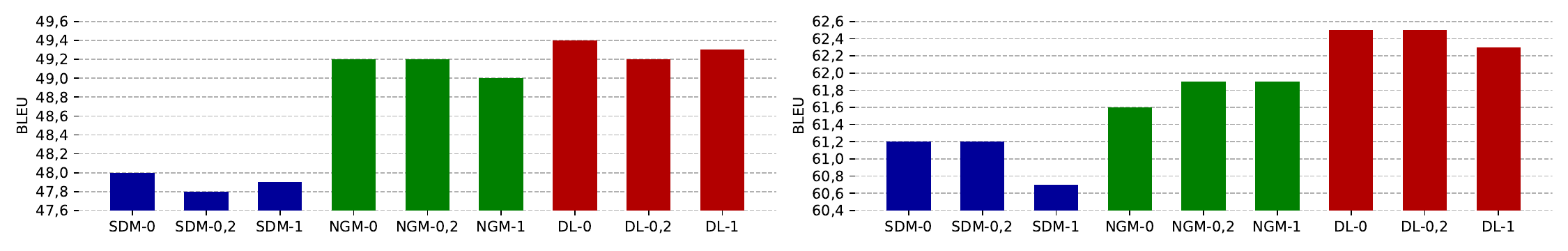}
	\caption{
		\label{fig:score-bleu}
		Scores BLEU moyens selon le score de la similarité DL, pour $\lambda \in \{0; 0,2; 1\}$ sur \textit{test-0.4} (gauche) et \textit{test-0.6} (droite).
	}
\end{figure}

\paragraph{Rôle de $\lambda$ :} Nous regardons l'impact de $\lambda$ pour DL (Distance de Levenshtein) figures~\ref{fig:lambda-couv-pert} et \ref{fig:lambda-long} en termes de \emph{couverture}, \emph{pertinence} et \emph{longueur}. 
Nous trouvons un compromis entre pertinence et couverture.
L'intervalle $[0,1; 0,9]$ est très stable.
Nous observons une singularité à $\lambda = 0$. La couverture est légèrement plus faible que $\lambda=0,1$.
% Cela s'explique soit par la légère supériorité du nouvel algorithme, soit par une tokénisation en mots différente entre l'outil de recherche et Moses. 
% La longueur semble également plus courte, probablement dû au système de réinitialisation qui favorise des phrases au score de DL élevé, qui induit des phrases plus courtes. Or, une longueur élevée rend l'exploitation des phrases plus ardue par le modèle dont l'édition produite a plus de chance d'échouer (lié au caractère non-autoregressive de \mlevt{k}). 
Le cas $\lambda=1$ est singulier car ayant la couverture (resp. pertinence) la plus faible (resp. la plus élevée), ainsi que la plus faible longueur moyenne. Quant au score \textbf{BLEU} (voir figure~\ref{fig:lambda-bleu}), on observe un comportement différent entre les correspondances \textit{médiocres} (\textit{test-0.4}) et \textit{bonnes} (\textit{test-0.6}). $\lambda=0$ semble en général mieux sur \textit{test-0.4}, même si aucune tendance ne s'en dégage. Sur \textit{test-0.6}, on observe une courbe en cloche avec une inflexion à $\lambda=0$. La tendance semble indiquer que $\lambda=0,5$ produit les meilleurs résultats, et cela même sur \textit{test-0.4}. À noter qu'il y a des différences entre domaines (voir Annexe~\ref{appendix:complement}).
\begin{figure}
	\includegraphics[width=\textwidth]{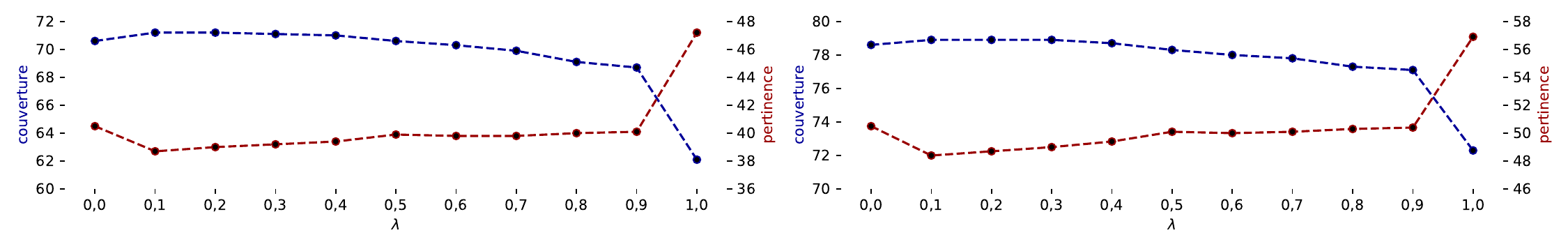}
	\caption{
		\label{fig:lambda-couv-pert}
		Couverture et pertinence moyenne selon la valeur de $\lambda$ pour DL sur \textit{test-0.4} (gauche) et \textit{test-0.6} (droite).
	}
\end{figure}
\begin{figure}

	\includegraphics[width=\textwidth]{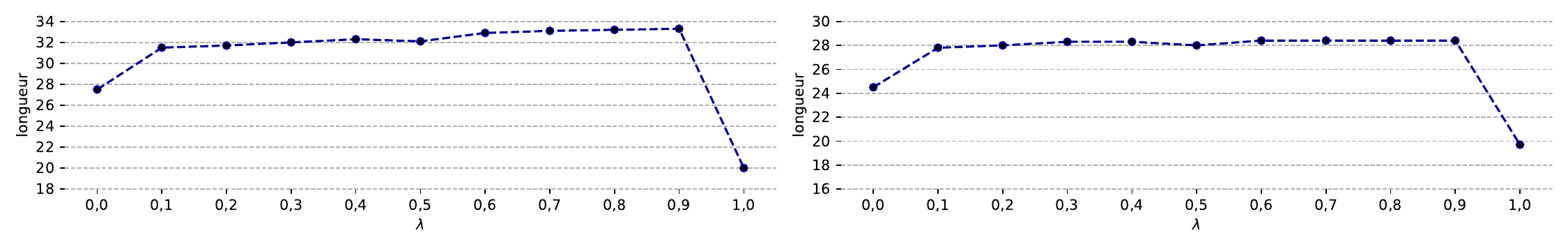}
	\caption{
		\label{fig:lambda-long}
		Longueur moyenne selon la valeur de $\lambda$ pour DL sur \textit{test-0.4} (gauche) et \textit{test-0.6} (droite).
	}
\end{figure}
\begin{figure}
	\includegraphics[width=\textwidth]{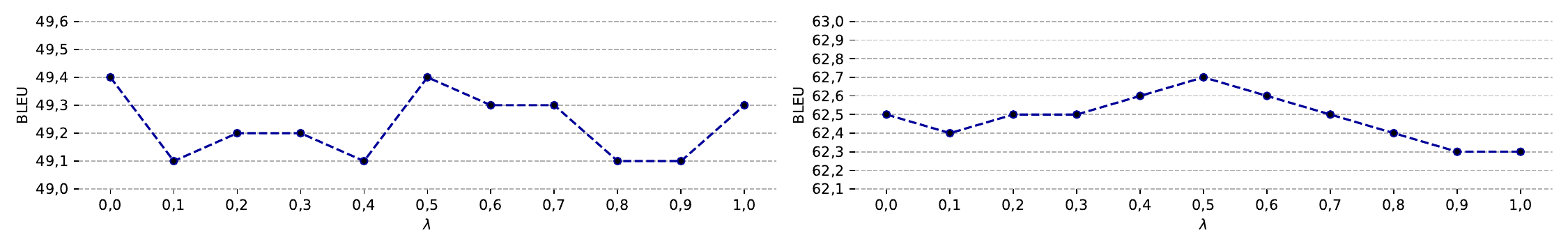}
	\caption{
		\label{fig:lambda-bleu}
		Score BLEU moyen en fonction de $\lambda$ pour DL sur \textit{test-0.4} (gauche) et \textit{test-0.6} (droite).
	}
\end{figure}

\paragraph{Comparaison avec MMR} Nous comparons les méthodes DL pour $\lambda \in \{0; 0,5; 1\}$
% Cela est à mettre en comparaison avec MMR qui obtient des scores de couverture, pertinence et longueur respectivement de $64.3$, $46.1$ et $20.4$ (\textit{test-0.4}) et $73.6$, $56.3$ et $19.9$ (\textit{test-0.6}). Autrement dit, MMR ne diffère que très peu du cas $\lambda=1$ qui correspond à prendre les $k$ plus proches.
% les trois valeurs de $\lambda$ distinguables ($0$, $0.5$ et $1$) par rapport à
avec la méthode contrastive MMR. Sur la figure~\ref{fig:domain-bleu}, nous observons de très légères différences entre les méthodes de recherche. MMR est légèrement mieux (+0,1 BLEU) sur \textit{test-0.4}, mais DL-$0,5$ surpasse MMR de $0,1$ sur \textit{test-0.6}. Aucune des deux méthodes ne semble particulièrement supérieure.

\begin{figure}
	\includegraphics[width=\textwidth]{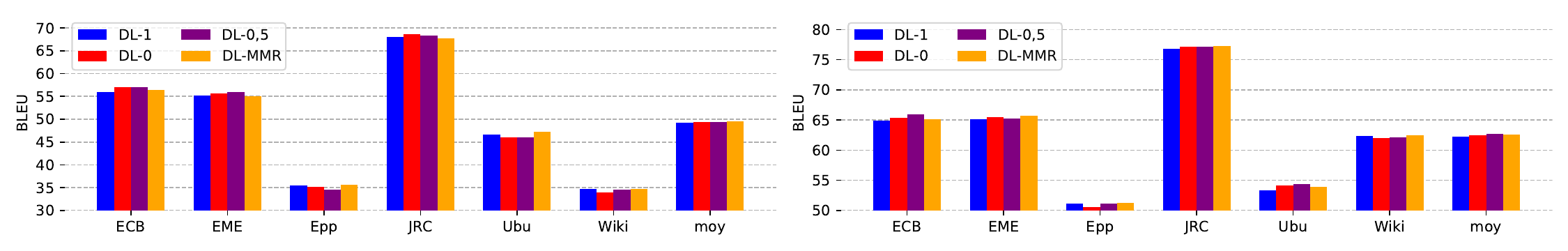}
	\caption{
		\label{fig:domain-bleu}
		Score BLEU avec $\lambda \in\{0; 0,5; 1\}$ et MMR sur \textit{test-0.4} (gauche) et \textit{test-0.6} (droite).
	}
\end{figure}

\paragraph{Rôle de la normalisation :} Par défaut, la normalisation se fait sur le nombre d'\emph{aspects} à couvrir, sauf pour DL où il s'agit du maximum entre la longueur de la source et de l'exemple. Lorsqu'on introduit une normalisation IDF, on observe un effet en moyenne négatif sur le score BLEU (voir figure~\ref{fig:idf-bleu}).
\begin{figure}
	\includegraphics[width=\textwidth]{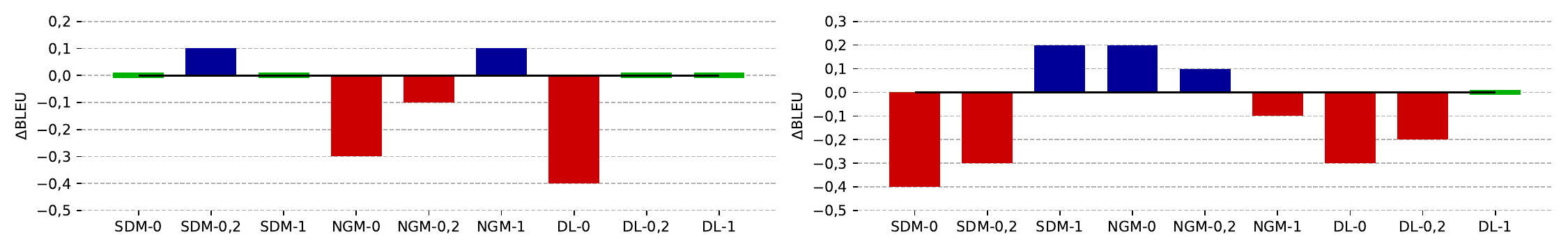}
	\caption{
		\label{fig:idf-bleu}
		Différence moyenne de BLEU entre normalisation IDF et cardinale, avec $\lambda \in\{0; 0,2; 1\}$ et SDM, NGM, DL sur \textit{test-0.4} (gauche) et \textit{test-0.6} (droite).
	}
\end{figure}
On peut expliquer ce comportement par le fait que le transformeur Multi-Levenshtein apprend à copier un maximum de tokens, même si ceux-ci sont fréquents. Autrement dit, il favorise la quantité à la qualité. Il vaut donc probablement mieux fournir des exemples plus couvrants, même s'il s'agit de termes communs.

\section{Conclusion}

Dans cet article, nous étudions comment optimiser la sélection d'un ensemble varié d'exemples dans une mémoire de traduction. Par analogie des travaux en recherche d'information, nous nous appuyons sur la théorie des fonctions sous-modulaires. Dans certaines configurations nous observons un léger gain avec l'utilisation de ce paradigme, avec de grandes variations selon le domaine. Une difficulté de cette approche est qu'elle sélectionne des exemples plus longs que la méthode de base, ce qui rend plus difficile leur édition conjointe et limite les améliorations des scores de traduction.

Dans le futur, nous souhaitons continuer à étudier la relation entre choix des exemples et modèle d'édition, par exemple en entraînant le modèle de recherche d'information conjointement avec le modèle de traduction. Une autre piste consiste à chercher à imposer une contrainte de longueur aux phrases couvrantes, afin qu'elles soient plus exploitables. Enfin, \mlevt{k} apprend à maximiser la couverture de la cible sans se soucier de la rareté des mots couverts. Peut-être vaut-il mieux privilégier l'utilité des mots couverts plutôt que leur nombre, en incitant le modèle à conserver des termes qu'il aurait du mal à regénérer.

\section{Remerciements}
Ce projet a été partiellement financé par l'ANR dans le cadre du projet TraLaLam (ANR-23-IAS1-0006). Il a également bénéficié des resources HPC/AI de GENCI-IDRIS (2022-AD011013583
et 2023-AD010614012).

%%================================================================
%% Note : si l'on préfère éviter de factoriser les crossrefs :
%% bibtex -min-crossrefs 99 taln-exemple
%%================================================================
\bibliographystyle{jeptaln2024}
\bibliography{../anthology,../custom}
% \nocite{TALN2015,LaigneletRioult09,LanglaisPatry07,SeretanWehrli07}

%%================================================================

\appendix

\section{Configuration du Modèle}
\label{appendix:config}

Nous utilisons le transformeur Multi-Levenshtein\footnote{disponible à \href{https://github.com/Maxwell1447/fairseq}{https://github.com/Maxwell1447/fairseq}.}. La dimension de plongement est $512$; la taille de la couche linéaire est de $2048$; le nombre de têtes est $8$; le nombre de couches de l'encodeur et de décodeur est $6$; les plongements sont tous partagés; le taux de dropout est $0,3$.

L'entraînement est fait avec un optimisateur Adam $(\beta_1, \beta_2)=(0,9; 0,98)$; un planificateur racine carrée inversée; un taux d'apprentissage de $5e^{-4}$; un lissage d'étiquette de $0,1$; une mise à jour à $10~000$; une précision flottante de $16$. Le nombre d'itérations est fixé à $60$k. La taille de lot et le nombre de GPU sont choisis pour avoir en moyenne $450$ échantillons par itération. Le modèle est pré-entraîné sur des données synthétiques décrites par \citet{bouthors-etal-2023-towards}.
Nous utilisons un vocabulaire joint de taille $32$k. Nous utilisons le réalignement et le réaffinage itératif avec une pénalité de $3$ sur le fait d'insérer $0$ à l'étape d'insertion, et un nombre maximum d'itérations de $10$. \cite{gu-etal-2019-levenshtein}.

Les données d'entraînement sont les $11$ domaines en-fr de \citet{bouthors-etal-2023-towards}, dont les $6$ domaines que nous avons choisis. Les exemples sont construits avec comme dans la configuration DL, c-à-d avec un préfiltrage des $100$ exemples avec les meilleurs scores BM25, puis les $3$ meilleurs exemples de similarité DL. Cependant, les phrases avec un score $<0,4$ sont retirées comme dans le papier original.

\section{Illustration}
\label{appendix:illustration}

Le tableau~\ref{tab:illustration} illustre le compromis entre \textbf{couverture} et \textbf{pertinence} (liée au score DL individuel). Ici, nous effectuons une RPS sur un micro corpus de 11 phrases plus ou moins similaires à la source. Le terme "vert" apparaissant uniquement dans une phrase très peu pertinente (dans le sens où peu de termes de la phrase sont dans la source), un $\lambda$ trop faible la récupérera pour compléter le dernier terme couvrable. Au contraire, un $\lambda$ trop élevé a tendance à récupérer des phrases similaires les unes aux autres, avec une haute pertinence, mais une faible couverture.

\begin{table}
	\begin{tabular}{c|c|c|l}
		méthode & couv & score DL & \textbf{source :} Le chat est assis sur le tapis vert du salon . \\ \hline\hline
		&&0,64& \underline{Le} \underline{chat} \underline{est} \underline{assis} \underline{sur} \underline{le} sol \underline{.} \\
		 DL-0&&0,27& J' ai acheté un nouveau \underline{tapis} pour \underline{le} \underline{salon} \underline{.} \\
		à&0,91&& Après une longue journée de marche , au crépuscule , je décide  \\
		DL-0,3&&0,08& enfin de me reposer à côté d' un grand rocher tout \underline{vert} de \\ 
		&&& mousse dans la forêt à côté du domaine de Courbetin \underline{.}  \\ \hline
		DL-0,4 &&0,64& \underline{Le} \underline{chat} \underline{est} \underline{assis} \underline{sur} \underline{le} sol \underline{.} \\
		à &0,82&0,27& J' ai acheté un nouveau \underline{tapis} pour \underline{le} \underline{salon} \underline{.} \\
		DL-0,6&&0,27& Regarde ce \underline{chat} , il \underline{est} \underline{assis} \underline{sur} \underline{le} comptoir \underline{.}  \\ \hline
		DL-0,7 &&0,64& \underline{Le} \underline{chat} \underline{est} \underline{assis} \underline{sur} \underline{le} sol \underline{.} \\
		à &0,64&0,27& Regarde ce \underline{chat} , il \underline{est} \underline{assis} \underline{sur} \underline{le} comptoir \underline{.} \\
		DL-0,9&&0,45& \underline{Le} \underline{chat} \underline{est} \underline{assis} à l' entrée \underline{.}  \\ \hline
		&&0,64& \underline{Le} \underline{chat} \underline{est} \underline{assis} \underline{sur} \underline{le} sol \underline{.} \\
		DL-1 &0,64&0,45& \underline{Le} \underline{chat} \underline{est} \underline{assis} à l' entrée \underline{.} \\
		&&0,36& \underline{Le} \underline{chat} est dans une boîte en carton \underline{.}  \\ \hline\hline
		&&0,64& \underline{Le} \underline{chat} \underline{est} \underline{assis} \underline{sur} \underline{le} sol \underline{.} \\
		DL-MMR &0,82&0,45& \underline{Le} \underline{chat} \underline{est} \underline{assis} à l' entrée \underline{.} \\
		&&0,27& J' ai acheté un nouveau \underline{tapis} pour \underline{le} \underline{salon} \underline{.} \\
	\end{tabular}
	\caption{\label{tab:illustration} Illustration des effets des paramètres de RPS ($\lambda$ et MMR) pour le score DL, avec un compromis entre couverture globale de la source (couv) et la proximité individuelle des phrases exemples. Les termes couvrants sont soulignés.}
\end{table}

\section{Démonstrations}
\label{appendix:preuves}

\subsection{Sous-modularité de la \textit{couverture pondérée lissée}}
\label{appendix:preuve-sous-modularite}

\begin{lemma}
	\label{lemma:def}
	Soit $f: 2^\Omega \to \mathbb R$. La propriété suivante découle de la définition de la sous-modularité de $f$ : $f$ sous-modulaire si et seulement si
	$\forall X \subseteq \Omega$, $\forall x_1, x_2 \in \Omega \setminus X$ s.t. $x_1 \neq x_2$: 
	\[
		f(X \cup \{x_1\}) + f(X \cup \{x_2\}) \geq f(X \cup \{x_1, x_2\}) + f(X)
	\]
\end{lemma}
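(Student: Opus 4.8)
Le plan est de prouver séparément les deux implications de l'équivalence. Le sens direct est quasi immédiat. En supposant $f$ sous-modulaire au sens de la définition initiale, je fixerais $X \subseteq \Omega$ et $x_1, x_2 \in \Omega \setminus X$ distincts, puis j'appliquerais la définition aux ensembles emboîtés $X \subset X \cup \{x_1\}$ et à l'élément $z = x_2$. Ce choix est licite car $x_2 \in \Omega \setminus (X \cup \{x_1\})$, puisque $x_2 \neq x_1$ et $x_2 \notin X$. L'inégalité $f(X \cup \{x_2\}) - f(X) \geq f(X \cup \{x_1, x_2\}) - f(X \cup \{x_1\})$ ainsi obtenue se réarrange directement en la condition par paires recherchée.

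Le sens réciproque constitue la partie principale du travail. En supposant la condition par paires, je veux retrouver la décroissance des rendements marginaux pour des ensembles emboîtés quelconques $X \subset Y \subset \Omega$ et tout $z \in \Omega \setminus Y$. L'idée est de reconstruire $Y$ à partir de $X$ en ajoutant un à un les éléments de $Y \setminus X = \{y_1, \dots, y_m\}$. Je poserais $X_0 = X$ et $X_i = X \cup \{y_1, \dots, y_i\}$, de sorte que $X_m = Y$. Pour chaque indice $i$, j'appliquerais l'hypothèse par paires à l'ensemble $X_i$ et aux deux éléments $y_{i+1}$ et $z$, ce qui donne $f(X_i \cup \{z\}) - f(X_i) \geq f(X_{i+1} \cup \{z\}) - f(X_{i+1})$, c'est-à-dire que le rendement marginal de $z$ ne croît pas lorsqu'on passe de $X_i$ à $X_{i+1}$.

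Il resterait alors à télescoper ces inégalités le long de la chaîne $X_0 \subset X_1 \subset \dots \subset X_m$, ce qui fournit $f(X \cup \{z\}) - f(X) \geq f(Y \cup \{z\}) - f(Y)$, soit exactement la définition de la sous-modularité (le cas $X = Y$, où $m = 0$, étant trivial). Le point de vigilance principal, et le seul véritable obstacle technique, est de s'assurer à chaque étape que l'hypothèse par paires est applicable : il faut vérifier que $y_{i+1}$ et $z$ sont distincts et n'appartiennent pas à $X_i$. Cela découle du fait que $z \notin Y \supseteq X_i$ et que les $y_j$ sont deux à deux distincts et choisis hors de $X$, donc $y_{i+1} \notin X_i$. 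Une fois ce contrôle d'appartenance effectué, l'argument de télescopage est purement formel.
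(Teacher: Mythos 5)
Votre démonstration est correcte et complète : le sens direct s'obtient bien en appliquant la définition au couple emboîté $X \subset X \cup \{x_1\}$ avec $z = x_2$, et le sens réciproque par chaînage des inégalités par paires le long de $X_0 \subset \cdots \subset X_m = Y$, avec les vérifications d'appartenance requises ($z \notin X_i$ car $z \notin Y$, et $y_{i+1} \neq z$ car $y_{i+1} \in Y$). L'article ne donne aucune preuve de ce lemme (il est simplement affirmé comme découlant de la définition), et votre argument est la démonstration canonique de cette équivalence ; il n'y a donc rien à comparer, si ce n'est que vous comblez proprement une lacune laissée implicite.
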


Nous utilisons le lemme~\ref{lemma:def} pour montrer que la fonction de \textit{couverture pondérée lissée} \eqref{eq:definition-couverture-lambda} est sous-modulaire.
\begin{proof}
	Soit $f_n(Z)$ le terme dans l'équation \eqref{eq:definition-couverture-lambda} tel que $f(Z) = \sum_{n=1}^N f_n(Z)$ :

	\begin{equation}
		f_n(Z) = v_0 + \lambda v_1 + \cdots + \lambda^{K-1} v_{K-1},
	\end{equation}
	où les $v_i$ sont triés dans l'ordre décroissants, et $K = |Z|$.
	Soient $v$ et $v'$ la composante en $n$ de deux nouveaux éléments de $\Omega \setminus Z$. On considère que $v \geq v'$. On nomme $i$ et $j$ les indices tels que: $v_i \geq v \geq v_{i+1}$ et $v_j \geq v' \geq v_{j+1}$.

	\begin{align*}
		 & \bullet~ f_n(Z \cup \{v\}) =  v_0 + \cdots
		+ \lambda^i v_i + \lambda^{i+1} v + \lambda^{i+2} v_{i+1}
		+ \cdots + \lambda^{K} v_{K-1}                                                                                \\
		 & \bullet~ f_n(Z \cup \{v'\}) =  v_0 + \cdots
		+ \lambda^j v_j + \lambda^{j+1} v' + \lambda^{j+2} v_{j+1}
		+ \cdots + \lambda^{K} v_{K-1}                                                                                \\
		 & \bullet~ f_n(Z \cup \{v, v'\}) =  v_0 + \cdots
		+ \lambda^i v_i + \lambda^{i+1} v + \lambda^{i+2} v_{i+1}
		+ \cdots                                                                                                      \\
		 & ~~~~~~~~~~~~~~~~~~~~~~~~~~~~~~~~~~~~~~~~+ \lambda^{j+1} v_{j} + \lambda^{j+2} v' + \lambda^{j+3} v_{j+1}
		+ \cdots + \lambda^{K+1} v_{K-1}
	\end{align*}

	On simplifie en nommant
	$L = \sum_{l\leq i} \lambda^l v_l$,
	$M = \sum_{i < l \leq j} \lambda^l v_l$ et
	$R = \sum_{j < l} \lambda^l v_l$, ce qui donne:
	\begin{align*}
		 & \bullet~ f_n(Z \cup \{v\}) = L + \lambda^i v + \lambda (M + R)                                \\
		 & \bullet~ f_n(Z \cup \{v'\}) =  L + M + \lambda^j v + \lambda R                                \\
		 & \bullet~ f_n(Z \cup \{v, v'\}) = L + \lambda^i v + \lambda M + \lambda^{j+1} v' + \lambda^2 R
	\end{align*}

	Pour montrer que $f_n$ est sous-modulaire, il suffit de montrer que ce terme est positif :
	\[
		f_n(Z \cup \{v\}) + f_n(Z \cup \{v'\}) - f_n(Z \cup \{v, v'\}) - f_n(Z) = (1-\lambda) \left[ \lambda^j v' - (1-\lambda)R \right]
	\]
	Puisque $\lambda \in [0, 1]$, alors $1-\lambda > 0$.
	D'autre part :
	\begin{align*}
		R \leq \lambda^{j+1} v_{j+1}
		 & \Rightarrow -(1-\lambda) R \geq -(1-\lambda) \lambda^{j+1} v_{j+1}                            \\
		 & \Rightarrow \lambda^j v' - (1-\lambda)R  \geq \lambda^j v' -(1-\lambda) \lambda^{j+1} v_{j+1}
	\end{align*}

	$\lambda^j > \lambda^{j+1} (1 - \lambda) \geq 0$ et $v' \geq v_{j+1} \geq 0$, donc $\lambda^j v' \geq (1-\lambda) \lambda^{j+1} v_{j+1}$.

	$f_n(Z \cup \{v\}) + f_n(Z \cup \{v'\}) - f_n(Z \cup \{v, v'\}) - f_n(Z) \geq 0$ donc $f_n$ sous-modulaire.
	Puisque $f$ est une somme de fonction sous-modulaire, alors $f$ sous-modulaire.
\end{proof}

\subsection{Majoration de l'erreur d'approximation}
\label{appendix:preuve-borne}
\begin{theorem}
	Soit $f$ une fonction de couverture pondérée lissée de paramètre $\lambda$. Soit $Z$ la solution de l'algorithme~\ref{algo:w-greedy-submodular-approx} et $k$ le nombre d'exemples:
	\begin{equation}
		\label{eq:our-bound}
		f(Z) \geq \frac{1}{k} \left( \sum_{j = 0}^{k-1} \lambda^j \right) \max_{\bar Z : |\bar Z| = k} f(Z)
	\end{equation}
\end{theorem}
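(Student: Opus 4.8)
Le plan est de minorer $f(Z)$ par la valeur pondérée que l'algorithme~\ref{algo:w-greedy-submodular-approx} accumule, puis de comparer celle-ci à l'optimum $f(\bar Z)$. Je note $z_1, \dots, z_k$ les exemples sélectionnés dans l'ordre, $Z_t = \{z_1, \dots, z_t\}$, $W^{(t)}$ le vecteur de poids avant l'étape $t$ (donc $W^{(t)}_n = \lambda^{c_n(t-1)}$, où $c_n(t-1)$ est le nombre d'éléments de $Z_{t-1}$ couvrant l'aspect $n$), $a(z) = \sum_n v^{(n)}(z)$, et $B_t = (W^{(t)})^{\top} v(z_t)$ le gain pondéré réalisé à l'étape $t$. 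Je traiterais d'abord le cas générique $\lambda \in (0, 1]$, pour lequel aucune réinitialisation de $W$ n'intervient.

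La première étape serait d'établir $f(Z) \geq \sum_{t=1}^k B_t$. En regroupant $\sum_t B_t$ par aspect, la contribution de l'aspect $n$ vaut $\sum_i \lambda^{i-1} v^{(n)}(z_{[i]})$, où $z_{[i]}$ est le $i$-ème exemple couvrant l'aspect $n$ dans l'\emph{ordre de sélection} (ce $i$-ème exemple est choisi à une étape où $c_n = i-1$). Or la contribution du même aspect à $f(Z)$ est $\sum_i \lambda^{i-1} v^{(n)}_{g^{(n)}(i)}$, c'est-à-dire les mêmes valeurs à une permutation près, mais triées par ordre décroissant. Les poids $\lambda^{i-1}$ étant décroissants et les valeurs positives, l'inégalité de réarrangement donne $f_n(Z) \geq \sum_i \lambda^{i-1} v^{(n)}(z_{[i]})$, puis $f(Z) \geq \sum_t B_t$ en sommant sur $n$.

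La deuxième étape minorerait chaque $B_t$. Comme $c_n(t-1) \leq t-1$ et $\lambda \leq 1$, on a $W^{(t)}_n \geq \lambda^{t-1}$, d'où $B_t = \max_{z \in T \setminus Z_{t-1}} (W^{(t)})^{\top} v(z) \geq \lambda^{t-1} \max_{z \in T \setminus Z_{t-1}} a(z)$. Un argument de tiroirs fournit alors la clé : en notant $a_{(1)} \geq \dots \geq a_{(k)}$ les valeurs $a(\bar z)$ de l'optimum $\bar Z$ triées, au plus $t-1$ des $t$ meilleurs éléments de $\bar Z$ peuvent être dans $Z_{t-1}$, donc l'un d'eux reste disponible, ce qui donne $\max_{z \in T \setminus Z_{t-1}} a(z) \geq a_{(t)}$, et donc $B_t \geq \lambda^{t-1} a_{(t)}$. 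Je conclurais ensuite : les suites $(\lambda^{t-1})_t$ et $(a_{(t)})_t$ étant toutes deux décroissantes, l'inégalité de Tchebychev sur les sommes donne $\sum_t \lambda^{t-1} a_{(t)} \geq \frac{1}{k}(\sum_t \lambda^{t-1})(\sum_t a_{(t)}) = \frac{1}{k}(\sum_{j=0}^{k-1}\lambda^j) \sum_{\bar z \in \bar Z} a(\bar z)$, et enfin $\sum_{\bar z} a(\bar z) = \sum_n \sum_i v^{(n)}_{g^{(n)}(i)} \geq \sum_n \sum_i \lambda^{i-1} v^{(n)}_{g^{(n)}(i)} = f(\bar Z)$ puisque $\lambda^{i-1} \leq 1$. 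Le chaînage des étapes livre $f(Z) \geq \frac{1}{k}(\sum_{j=0}^{k-1}\lambda^j) f(\bar Z)$.

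La difficulté principale sera le traitement des éléments optimaux \emph{déjà sélectionnés} : une comparaison naïve étape par étape de $B_t$ à la moyenne sur tout $\bar Z$ est fausse (les premières étapes « sur-performent » en captant les meilleurs éléments, si bien que les étapes suivantes ne les voient plus), et c'est précisément l'argument de tiroirs combiné à l'inégalité de Tchebychev qui répare proprement ce défaut, en ne comparant jamais à un élément indisponible. Restera le cas $\lambda = 0$, singulier à cause de la réinitialisation de $W$ : l'inégalité $f(Z) \geq \sum_t B_t$ peut alors échouer (un aspect peut être recompté après réinitialisation), et je le traiterais séparément par monotonie, via $f(Z) \geq f(\{z_1\}) = a(z_1) = \max_z a(z) \geq \frac{1}{k}\sum_{\bar z} a(\bar z) \geq \frac{1}{k} f(\bar Z)$, ce qui coïncide avec la borne car $\sum_{j=0}^{k-1}\lambda^j = 1$ en $\lambda = 0$.
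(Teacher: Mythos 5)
Your proof is correct, and its skeleton coincides with the paper's: lower-bound the greedy value by the accumulated surrogate gains $\sum_t B_t$, show each $B_t \geq \lambda^{t-1} a_{(t)}$ by an availability (pigeonhole) argument, apply Chebyshev's sum inequality to the similarly ordered sequences $(\lambda^{t-1})$ and $(a_{(t)})$, and upper-bound the optimum by the sum of its singleton values. Where you genuinely diverge is in the proofs of the two supporting inequalities. For $f(Z) \geq \sum_t B_t$, the paper proves a per-step bound $\Delta(v \mid Z_{i-1}) \geq w_{i-1}^{\top} v$ (Lemme~\ref{lemma:delta-ineq}) via an explicit computation of the insertion position of $v^{(n)}$ into the sorted list, with the $L^{(n)}, R^{(n)}, m(n), c(n)$ bookkeeping and a telescoping sum; your aggregate argument --- group $\sum_t B_t$ by aspect and compare selection order to sorted order via the rearrangement inequality --- reaches the same conclusion in two lines and is cleaner. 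For $\sum_{\bar z \in \bar Z} a(\bar z) \geq f(\bar Z)$, the paper telescopes and invokes submodularity (Lemme~\ref{lemma:v_i_star-ineq}), whereas you only use $\lambda^{i-1} \leq 1$ and the nonnegativity of the weights; your version is more elementary and does not depend on the submodularity result of the preceding appendix. Finally, you are right that the reinitialization of $W$ makes $\lambda = 0$ a genuine special case: after a reset an aspect can be re-credited at full weight, so $f(Z) \geq \sum_t B_t$ can fail, and the paper's proof silently assumes $w_{i-1}^{(n)} = \lambda^{c(n)}$ throughout, which ignores the reset branch of Algorithme~\ref{algo:w-greedy-submodular-approx}. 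Your separate monotonicity argument $f(Z) \geq f(\{z_1\}) = \max_z a(z) \geq \frac{1}{k}\sum_{\bar z} a(\bar z) \geq \frac{1}{k} f(\bar Z)$ handles that case correctly and in fact patches a small gap in the published proof.
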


Si $\lambda = 1$, la solution est optimale. Si $\lambda = 0$, elle est au pire $\frac{1}{k}$ de l'optimum.
Dans le cas où les poids de couverture sont booléens ($0$ ou $1$), et $\lambda = 0$, l'algorithme~\ref{algo:w-greedy-submodular-approx} est équivalent au~\ref{algo:greedy-submodular} (hormis la partie de réinitialisation), ce qui nous permet de retomber sur la borne bien connue de $(1-1/e)$.
Nous montrons également que dans des cas limites spécifiques, cette borne peut être atteinte.

\begin{proof}
	L'équation \eqref{eq:our-bound} se démontre via les Lemmes \eqref{lemma:v_i_star-ineq} et \eqref{lemma:f_z_k-ineq}.
	Nous adoptons la notation suivante:
	$\Delta(v | Z) := f(Z \cup \{v\}) - f(Z) $.
	Soit $Z_k^*$ un maximiseur de $f$ de taille $k$, et $Z_k$, la solution construite par l'algorithme~\ref{algo:w-greedy-submodular-approx}, avec $Z_i$ (pour $i<k$) les itérations successives depuis $Z_0=\emptyset$. On note aussi, $Z_{i+1} = Z_i \cup \{v_{i+1}\}$.

	\begin{lemma}
		\label{lemma:v_i_star-ineq}
		\[
			\forall k > 0, f(Z_{k}^*) \leq \sum_{i=1}^k f(\{ v_i^* \}),
		\]
		où $v_i^*$ sont les éléments de $Z_{k}^*$ tels que $f(\{ v_i^* \})$ est une suite décroissante.
	\end{lemma}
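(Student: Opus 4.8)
The plan is to reduce the statement to an aspect-by-aspect, term-by-term comparison that exploits only two facts: the coverage weights $v^{(n)}$ are nonnegative, and $\lambda \in [0,1]$ so that $\lambda^{j-1} \le 1$ for every $j \ge 1$. I would write $f = \sum_{n=1}^N f_n$ with $f_n(Z) = \sum_{j=1}^{|Z|} \lambda^{j-1} v^{(n)}_{g^{(n)}(j)}$, exactly as in the proof of the submodularity lemma above. The whole argument then splits over $n$, and the global ordering of the $v_i^*$ by $f(\{v_i^*\})$ turns out to be irrelevant for \emph{this} particular inequality.

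First I would evaluate the right-hand side. For a singleton, only the $j=1$ term survives and $\lambda^0 = 1$, so $f(\{v_i^*\}) = \sum_{n=1}^N v^{(n)}(v_i^*)$. Summing over $i$ and exchanging the two finite sums gives $\sum_{i=1}^k f(\{v_i^*\}) = \sum_{n=1}^N \sum_{i=1}^k v^{(n)}(v_i^*)$. Next I would expand the left-hand side as $f(Z_k^*) = \sum_{n=1}^N f_n(Z_k^*) = \sum_{n=1}^N \sum_{j=1}^k \lambda^{j-1} v^{(n)}_{g^{(n)}(j)}$. The crucial observation is that, for each fixed aspect $n$, the permutation $g^{(n)}$ merely reorders the values, so the multiset $\{ v^{(n)}_{g^{(n)}(j)} : 1 \le j \le k \}$ coincides with $\{ v^{(n)}(v_i^*) : 1 \le i \le k \}$; in particular $\sum_{j=1}^k v^{(n)}_{g^{(n)}(j)} = \sum_{i=1}^k v^{(n)}(v_i^*)$, which is precisely the per-aspect right-hand side.

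The key step is then the termwise bound $\lambda^{j-1} v^{(n)}_{g^{(n)}(j)} \le v^{(n)}_{g^{(n)}(j)}$, valid because $0 \le \lambda^{j-1} \le 1$ and $v^{(n)} \ge 0$. Summing over $j$ yields $f_n(Z_k^*) \le \sum_{i=1}^k v^{(n)}(v_i^*)$ for each $n$, and summing over $n$ gives the claimed inequality $f(Z_k^*) \le \sum_{i=1}^k f(\{v_i^*\})$.

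I do not expect a genuine obstacle: the inequality is a plain domination of a geometrically damped sum by the undamped one. The only points requiring care are the nonnegativity of the coverage weights (which is part of the coverage-function setup and is already used implicitly in the submodularity proof) and the remark that the ordering by $f(\{v_i^*\})$ stated in the lemma plays no role in this bound — it is recorded only because it will be needed when this lemma is combined with Lemma~\ref{lemma:f_z_k-ineq} to produce the final factor $\frac{1}{k}\sum_{j=0}^{k-1}\lambda^j$.
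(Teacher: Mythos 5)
Your proof is correct, but it follows a genuinely different route from the paper's. The paper proves this lemma abstractly: it telescopes $f(Z_k^*) = \sum_{i=1}^k \Delta(v_i^* \mid \{v_1^*, \dots, v_{i-1}^*\})$ and then bounds each marginal gain by $\Delta(v_i^* \mid \emptyset) = f(\{v_i^*\})$ using the submodularity of $f$ (established earlier in the appendix) together with $f(\emptyset) = 0$; this is the standard subadditivity of a normalized submodular function and would apply verbatim to \emph{any} such $f$, with the ordering of the $v_i^*$ playing no role here either. You instead exploit the explicit form of the smoothed weighted coverage function: decomposing over aspects, identifying $f(\{z\}) = \sum_n v^{(n)}(z)$, and dominating the geometrically damped sum $\sum_j \lambda^{j-1} v^{(n)}_{g^{(n)}(j)}$ by the undamped one via $0 \le \lambda^{j-1} \le 1$ and $v^{(n)} \ge 0$. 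Your argument is more elementary and self-contained (it does not invoke submodularity at all), at the price of being tied to this particular functional form; the paper's argument is shorter once submodularity is available and generalizes beyond coverage functions. Your remarks that nonnegativity of the weights is the load-bearing hypothesis and that the decreasing ordering of $f(\{v_i^*\})$ is only needed later, when this lemma is combined with Lemma~\ref{lemma:f_z_k-ineq}, are both accurate.
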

	\begin{proof}
		\begin{align*}
			f(Z_k^*)
			 & = \sum_{i = 1}^k \Delta(v_i^* | \{v_1^*, \cdots, v_{i-1}^*\}) & \text{par téléscopage}               \\
			 & \leq \sum_{i = 1}^k \Delta(v_i^* | \emptyset)                 & \text{car } f \text{ sous-modulaire} \\
			 & = \sum_{i = 1}^k f(\{v_i^*\})                                 &                                      \\
		\end{align*}
	\end{proof}

	\begin{lemma}
		\label{lemma:delta-ineq}
		\[
			\forall i, \forall v \notin Z_i, \Delta(v|Z_i) \geq w_{i}^T v,
		\]
	\end{lemma}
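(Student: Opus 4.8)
The plan is to reduce the statement to a single aspect and exploit the closed form of the marginal gain obtained by inserting one value into a sorted, geometrically weighted list. The first step is to make explicit what the weight vector $w_i$ is at the moment $Z = Z_i$. Each coordinate of $W$ in Algorithm~\ref{algo:w-greedy-submodular-approx} is multiplied by $\lambda$ exactly once per already-selected element that covers the corresponding aspect, so
\[
w_i^{(n)} = \lambda^{c_i(n)}, \qquad c_i(n) = \bigl|\{\,z \in Z_i : v^{(n)}(z) > 0\,\}\bigr|,
\]
with the convention $0^0 = 1$ (the reset branch, which only fires for $\lambda = 0$, is an implementation detail that does not affect this identity). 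I would record this at the outset.

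Next, using the decomposition $f = \sum_n f_n$ already introduced in the proof of submodularity, I would compute $\Delta(v|Z_i)$ aspect by aspect. Fix $n$, write $a := v^{(n)}(v)$, and let $v_0 \ge v_1 \ge \dots \ge v_{i-1}$ be the aspect-$n$ weights of the elements of $Z_i$ in decreasing order. Adding $v$ inserts $a$ at the sorted position $p$ characterised by $v_{p-1} \ge a \ge v_p$, which gives $a$ the coefficient $\lambda^p$ and shifts every later term from $\lambda^l$ to $\lambda^{l+1}$. This yields the exact identity
\[
f_n(Z_i \cup \{v\}) - f_n(Z_i) = \lambda^p a - (1-\lambda)\sum_{l=p}^{i-1} \lambda^l v_l .
\]

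The core estimate is then a geometric-sum bound on the negative shift term. If $a = 0$ the increment is $0 = w_i^{(n)} a$ and there is nothing to prove. If $a > 0$, then $a$ is inserted among the strictly positive weights, so $p \le c_i(n)$, the terms with $l \ge c_i(n)$ vanish, and $v_l \le a$ for every $l \ge p$. Bounding $v_l$ by $a$ and summing the geometric series gives
\[
(1-\lambda)\sum_{l=p}^{c_i(n)-1} \lambda^l v_l \;\le\; (1-\lambda)\,a \sum_{l=p}^{c_i(n)-1}\lambda^l \;=\; a\bigl(\lambda^p - \lambda^{c_i(n)}\bigr),
\]
so that $f_n(Z_i \cup \{v\}) - f_n(Z_i) \ge \lambda^p a - a(\lambda^p - \lambda^{c_i(n)}) = \lambda^{c_i(n)} a = w_i^{(n)}\,v^{(n)}(v)$. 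Summing over all aspects $n$ then yields $\Delta(v|Z_i) \ge w_i^T v$, which is exactly the claim.

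The per-aspect arithmetic is routine; the only genuine care is in the bookkeeping around the insertion index. The main point to verify is that $p \le c_i(n)$ when $a > 0$, so that truncating the sum at $c_i(n)-1$ is legitimate and the decrement is captured by a \emph{finite} geometric series bounded by $a(\lambda^p-\lambda^{c_i(n)})$. The other place to be precise is matching the algorithm's multiplicative update to the exponent $c_i(n)$; I would state the convention $0^0=1$ explicitly and note that for $\lambda=0$ the inequality reduces to the standard fact that an uncovered aspect contributes $a$ while a covered one contributes a nonnegative increment, consistent with the separate $(1-1/e)$ analysis mentioned after the theorem.
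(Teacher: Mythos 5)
Your proposal is correct and follows essentially the same route as the paper's proof: you compute the marginal gain by inserting $v^{(n)}$ at its sorted position (the paper's $m(n)$, your $p$), truncate the resulting negative shift term at the number $c(n)$ of strictly positive weights, bound each shifted weight by $v^{(n)}$, and collapse the geometric sum to $\lambda^{p}-\lambda^{c(n)}$, exactly as in the paper. Your per-aspect presentation, the explicit treatment of the case $v^{(n)}=0$, and the remark on the $0^0=1$ convention for $\lambda=0$ are minor clarifications rather than a different argument.
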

	\begin{proof}
		Étant donné $n < N$, on nomme $z_i^{(n)}$ les éléments triés de $v_{[1:k]}^{(n)}$, tels que $z_1^{(n)} \geq \cdots \geq z_i^{(n)}$.
		$f$ s'exprime comme :
		\[
			f(Z_i) = \sum_{n=1}^N z_i^{(n)} \lambda^{i-1}
		\]

		Pour calculer $f(Z_i \cup \{v\})$ pour n'importe quel $v$,
		pour chaque $n$, $v^{(n)}$ est inséré dans la séquence ordonnée des
		$z_i^{(n)}$.
		Soit $m(n)$ la position d'insertion. Autrement dit, $z_1^{(n)} \geq \cdots \geq z_{m(n)}^{(n)} \geq v^{(n)} \geq z_{m(n)+1}^{(n)} \geq \cdots \geq z_i^{(n)}$.

		De même, on appelle $L^{(n)}$ et $R^{(n)}$ les parties gauche et droite de la somme :
		\[
			L^{(n)} = \sum_{i \leq m(n)} z_i^{(n)} \lambda^{i-1},
			~~~~~~~~~~~~~
			R^{(n)} = \sum_{i > m(n)} z_i^{(n)} \lambda^{i-1}
		\]

		Ainsi :
		\[
			f(Z_i \cup \{v\}) = \sum_{n=1}^N L^{(n)} + \lambda^{m(n)} v^{(n)} + \lambda R^{(n)}.
		\]

		\begin{align*}
			\Delta(v|Z_i)
			                                                & = f(Z_i \cup \{v\}) - f(Z_i)                                                     & \\
			                                                & = \sum_{n=1}^N \left[ L^{(n)} + \lambda^{m(n)} v^{(n)} + \lambda R^{(n)} \right]
			- \sum_{n=1}^N \left[ L^{(n)} + R^{(n)} \right] &                                                                                    \\
			                                                & = \sum_{n=1}^N \lambda^{m(n)} v^{(n)}
			- (1-\lambda) \sum_{n=1}^N R^{(n)}              &                                                                                    \\
		\end{align*}

		Maintenant, à propos de $w_{i-1}^T v$, on note $c(n)$ le nombre de $v_j^{(n)}$ strictement positifs ($1 \leq j < i$). Par construction, $w_{i-1}^{(n)} = \lambda^{c(n)}$, d'où:
		\[
			w_{i-1}^T v = \sum_{n=1}^N \lambda^{c(n)} v^{(n)}.
		\]

		\begin{align*}
			                                                                             & \Delta(v|Z_i) - w_{i-1}^T v                                   \\
			                                                                             & = \sum_{n=1}^N \lambda^{m(n)} v^{(n)}
			- (1-\lambda) \sum_{n=1}^N R^{(n)} - \sum_{n=1}^N \lambda^{c(n)} v^{(n)}     &                                                               \\
			                                                                             & = \sum_{n=1}^N (\lambda^{m(n)} - \lambda^{c(n)}) v^{(n)}
			- (1-\lambda) \sum_{n=1}^N \sum_{i > m(n)} z_i^{(n)} \lambda^{i-1}           &                                                               \\
			                                                                             & \geq \sum_{n=1}^N (\lambda^{m(n)} - \lambda^{c(n)}) v^{(n)}
			- (1-\lambda) \sum_{n=1}^N \sum_{m(n) < i \leq c(n)} z_i^{(n)} \lambda^{i-1} & \text{ par construction de } c(n)                             \\
			                                                                             & \geq \sum_{n=1}^N (\lambda^{m(n)} - \lambda^{c(n)}) v^{(n)}
			-  \sum_{n=1}^N v^{(n)} (1-\lambda) \sum_{m(n) < i \leq c(n)} \lambda^{i-1}  & \text{pour ces } i, z_i^{(n)} \leq v^{(n)}                    \\
			                                                                             & = \sum_{n=1}^N (\lambda^{m(n)} - \lambda^{c(n)}) v^{(n)}
			-  \sum_{n=1}^N v^{(n)} (\lambda^{m(n)} - \lambda^{c(n)})                    & \text{somme téléscopique}                                     \\
			                                                                             & = 0                                                         & \\
		\end{align*}
	\end{proof}

	\begin{lemma}
		\label{lemma:f_z_k-ineq}
		\[
			\forall k > 0, f(Z_{k}) \geq \sum_{i=1}^k f(\{ v_i^* \}) \lambda^{i-1},
		\]
		où $v_i^*$ sont éléments de $Z_{k}^*$ tels que $f(\{ v_i^* \})$ est une suite décroissante.
	\end{lemma}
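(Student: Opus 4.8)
The plan is to combine a telescoping decomposition of $f(Z_k)$ with the per-step bound of Lemma~\ref{lemma:delta-ineq} and a pigeonhole matching against the optimal set $Z_k^*$. First I would record two elementary facts. Since $f(\emptyset)=0$, telescoping gives $f(Z_k)=\sum_{i=0}^{k-1}\Delta(v_{i+1}\mid Z_i)$, where $Z_{i+1}=Z_i\cup\{v_{i+1}\}$ is the $(i+1)$-th iterate of Algorithm~\ref{algo:w-greedy-submodular-approx}. And evaluating the definition~\eqref{eq:definition-couverture-lambda} on a singleton (only the term $j=1$ survives, with $\lambda^0=1$) yields $f(\{v\})=\sum_{n=1}^N v^{(n)}$ for every $v$; writing $\mathbf 1$ for the all-ones vector, this is $f(\{v\})=\mathbf 1^T v$.

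Next I would establish the key per-step inequality $\Delta(v_{i+1}\mid Z_i)\geq \lambda^i f(\{v_j^*\})$ for a well-chosen optimal element $v_j^*$. Two ingredients combine. On one hand, the greedy rule selects $v_{i+1}$ as the maximiser of $w_i^T v$ over $v\in T\setminus Z_i$, where $w_i$ is the weight vector reflecting the coverage of $Z_i$; together with Lemma~\ref{lemma:delta-ineq} this gives $\Delta(v_{i+1}\mid Z_i)\geq w_i^T v_{i+1}\geq w_i^T v$ for every admissible candidate $v\notin Z_i$. On the other hand, each coordinate of $w_i$ equals $\lambda^{c(n)}$ with $c(n)\leq i$ the number of already-selected elements that cover aspect $n$; since $\lambda\in[0,1]$ and the coverage weights satisfy $v^{(n)}\geq 0$, this gives the coordinatewise bound $w_i^{(n)}\geq\lambda^i$, hence $w_i^T v\geq \lambda^i\sum_n v^{(n)}=\lambda^i f(\{v\})$.

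It then remains to choose $v$ well. At step $i$ the iterate $Z_i$ has exactly $i$ elements, whereas the top $i+1$ optimal elements $\{v_1^*,\dots,v_{i+1}^*\}$ are $i+1$ distinct members of $Z_k^*$; by pigeonhole at least one of them, say $v_{j(i)}^*$ with $j(i)\leq i+1$, lies outside $Z_i$ and is thus an admissible candidate. Feeding $v=v_{j(i)}^*$ into the chain above, and using that $f(\{v_1^*\})\geq\cdots\geq f(\{v_k^*\})$ is decreasing (so $f(\{v_{j(i)}^*\})\geq f(\{v_{i+1}^*\})$) together with $\lambda^i\geq 0$, I obtain $\Delta(v_{i+1}\mid Z_i)\geq \lambda^i f(\{v_{i+1}^*\})$. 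Summing over $i=0,\dots,k-1$ and re-indexing gives precisely $f(Z_k)\geq\sum_{i=1}^k\lambda^{i-1}f(\{v_i^*\})$.

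The main obstacle is the matching step: I must align the geometrically decaying factors $\lambda^i$ with the decreasingly sorted singleton values $f(\{v_i^*\})$, which is exactly why the pigeonhole argument charges step $i$ against one of the \emph{top} $i+1$ optimal elements rather than an arbitrary one. A secondary point of care is the weight bound $w_i^{(n)}\geq\lambda^i$ in the degenerate regime $\lambda=0$: there $\lambda^i=0$ for $i\geq 1$, so the bound is vacuous, and the possible reinitialisation of $W$ in Algorithm~\ref{algo:w-greedy-submodular-approx} merely raises the weights back to $1\geq\lambda^i$, hence never breaks the inequality.
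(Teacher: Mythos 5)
Your proof is correct and follows essentially the same route as the paper: telescoping $f(Z_k)$, bounding each increment below by $w_{i-1}^T v_i$ via Lemme~\ref{lemma:delta-ineq}, using the coordinatewise bound $w_{i-1}^{(n)}\geq\lambda^{i-1}$ together with the greedy selection rule, and matching step $i$ against one of the top $i$ optimal elements still available. Your single pigeonhole argument merely merges the paper's two-case analysis (whether $v_i^*$ has already been selected or not) into one cleaner step, which is a presentational rather than a substantive difference.
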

	\begin{proof}
		\begin{align*}
			f(Z_k)
			 & = \sum_{i=1}^k \Delta(v_i | Z_{i-1}) &                                                \\
			 & \geq \sum_{i=1}^k w_{i-1}^T v_i      & \text{grâce au Lemme \eqref{lemma:delta-ineq}} \\
		\end{align*}
		Soit $i \leq k$. Prouvons que $w_{i-1}^T v_i \geq \lambda^{i-1} f(\{v_i^*\}) = \lambda^{i-1} 1^T v_i^*$. Il y a deux cas:
		\begin{itemize}
			\item Si $v_i^* \notin Z_i$, alors par construction : $w_{i-1}^T v_i \geq w_{i-1}^T v_i^*$. Puisque chaque $w_{i-1}^{(n)} \geq \lambda^{i-1}$, alors $w_{i-1}^T v_i \geq \lambda^{i-1} 1^T v_i^*$.
			\item Si $v_i^* \in Z_i$, on sait que $\exists j<i$ tel que $v_j^* \notin Z_i$. Donc $w_{i-1}^T v_i \geq w_{i-1}^T v_j^* \geq \lambda^{i-1} 1^T v_j^*$. Puique $f(\{ v_i^* \})$ suite décroissante, $1^T v_j^* \geq 1^T v_i^*$. Par conséquent, $w_{i-1}^T v_i \geq \lambda^{i-1} 1^T v_i^*$.
		\end{itemize}
		D'où, $f(Z_k) \geq \sum_{i=1}^k f(\{ v_i^* \}) \lambda^{i-1}$.
	\end{proof}

	En combinant les Lemmes \eqref{lemma:v_i_star-ineq} et \eqref{lemma:f_z_k-ineq}, on obtient :
	\[
		\frac{f(Z_k)}{f(Z_k^*)} \geq \frac{\sum_{i=1}^k f(\{ v_i^* \}) \lambda^{i-1}}{\sum_{i=1}^k f(\{ v_i^* \})}.
	\]
	Grâce à l'inégalité de Tchebychev pour les sommes :
	\[
		\frac{f(Z_k)}{f(Z_k^*)} \geq \frac{\sum_{i=1}^k \lambda^{i-1}}{k} = \frac{1}{k} \frac{1 - \lambda^k}{1 - \lambda},
	\]
	D'où la borne.
\end{proof}

Enfin, nous pouvons montrer que la borne est atteinte dans un scénario, signifiant qu'il s'agit bien de la meilleure borne.
\begin{proof}
	Pour cela, nous supposons $N>k$.

	On suppose que l'ensemble de candidats est $T = (z_1, \cdots, z_N, e_1, \cdots e_N)$, tel que
	\[
		\left\{
		\begin{array}{l}
			\forall i, z_i = (\frac{1}{N}, \cdots, \frac{1}{N}) \\
			\forall i, e_i^{(n)} = \mathbbm{1}(i = n)
		\end{array}
		\right.
	\]

	Si l'algorithme s'obstine à choisir seulement des $z_j$, alors $w_i = (\lambda^i, \cdots, \lambda^i)$. Ce faisant, $\forall z \in T, w_i^T z = \lambda^i$. Par conséquent, l'algorithme peut choisir de manière équivalente n'importe quel $z \in T$ pour la prochaine itération. Si il ne choisit que des $z_j$ pour $Z_k$, alors :
	\[
		f(Z_k) = 1 + \lambda + \cdots + \lambda^{k-1} = \frac{1 - \lambda^k}{1 - \lambda}
	\]
	D'autre part, le meilleur ensemble $Z_k^*$ de $k$ éléments est constitué uniquement de $e_j$ :
	\[
		f(Z_k^*) = k
	\]
	Le ratio est celui de l'équation \eqref{eq:our-bound}.
\end{proof}

\section{Compléments}
\label{appendix:complement}

Pour avoir une vision complète des résultats, nous présentons les tableaux de couverture, pertinence et longueur (tableau~\ref{tab:all-metrics-scores}), ainsi que des scores BLEU par domaine (tableau~\ref{tab:all-bleu-scores} et tableau~\ref{tab:all-lambda-bleu-scores}).

Les domaines offrent une grande variabilité dont il est difficile d'extraire des tendances. Notons cependant que Europarl et Wikipedia ne bénéficient pas d'une plus grande couverture car ce sont les cas où $\lambda=1$ qui donnent le meilleur BLEU. C'est le contraire pour ECB et JRC-Aquis.

\begin{table}[!ht]
	\centering
	\begin{tabular}{l|lll|lll}
		            & \multicolumn{3}{c}{\textit{test-0.4}} & \multicolumn{3}{c}{\textit{test-0.6}}                                  \\ \hline
		recherche   & couv.                                 & perti.                                & long. & couv. & perti. & long. \\ \hline\hline
		SDM-0       & 70,8                                  & 36,7                                  & 35,6  & 77,4  & 46,0   & 31,3  \\
		SDM-0,2     & 71,0                                  & 36,5                                  & 36,1  & 77,5  & 45,7   & 31,7  \\
		SDM-1       & 66,7                                  & 38,1                                  & 37,8  & 75,2  & 47,7   & 32,1  \\
		SDM-IDF-0   & 70,4                                  & 36,3                                  & 34,8  & 77,2  & 45,5   & 31,4  \\
		SDM-IDF-0,2 & 70,8                                  & 36,4                                  & 35,0  & 77,4  & 45,5   & 31,5  \\
		SDM-IDF-1   & 68,5                                  & 37,7                                  & 37,6  & 75,3  & 46,9   & 32,4  \\
		NGM-0       & 70,3                                  & 38,5                                  & 30,5  & 77,4  & 47,2   & 28,3  \\
		NGM-0,2     & 70,3                                  & 39,1                                  & 31,0  & 77,4  & 48,1   & 28,6  \\
		NGM-1       & 66,6                                  & 40,4                                  & 32,4  & 74,6  & 49,7   & 29,1  \\
		NGM-IDF-0   & 70,4                                  & 38,3                                  & 30,3  & 77,5  & 46,9   & 28,1  \\
		NGM-IDF-0,2 & 70,6                                  & 38,9                                  & 30,8  & 77,5  & 47,7   & 28,4  \\
		NGM-IDF-1   & 66,9                                  & 40,2                                  & 32,3  & 74,8  & 49,1   & 29,3  \\
		DL-0        & 70,6                                  & 40,5                                  & 27,5  & 78,6  & 50,5   & 24,5  \\
		DL-0,2      & 71,2                                  & 39,0                                  & 31,7  & 78,9  & 48,7   & 28,0  \\
		DL-1        & 62,1                                  & 47,2                                  & 20,0  & 72,3  & 56,9   & 19,7  \\
		DL-IDF-0    & 70,6                                  & 39,5                                  & 28,1  & 78,6  & 49,5   & 25,1  \\
		DL-IDF-0,2  & 70,9                                  & 38,7                                  & 30,9  & 78,8  & 48,3   & 27,8  \\
		DL-IDF-1    & 62,1                                  & 47,2                                  & 20,0  & 72,3  & 56,9   & 19,7  \\
		DL-MMR      & 64,3                                  & 46,1                                  & 20,4  & 73,6  & 56,3   & 19,9  \\
	\end{tabular}
	\caption{
		\label{tab:all-metrics-scores}Couverture, pertinence et longueur moyennes selon le score de recherche choisi, avec ou sans normalisation IDF, et $\lambda \in \{0; 0,2; 1\}$.}

\end{table}

\begin{table}[!ht]
	\centering
	\begin{tabular}{llllllll}
		\                 & ECB  & EME  & Epp  & JRC  & Ubu  & Wiki & moy  \\ \hline\hline
		\textit{test-0.4} &      &      &      &      &      &      &      \\ \hline
		SDM-0             & 55,0 & 54,1 & 33,8 & 66,8 & 45,7 & 32,5 & 48,0 \\
		SDM-0,2           & 54,7 & 54,4 & 33,2 & 66,8 & 45,4 & 32,2 & 47,8 \\
		SDM-1             & 55,1 & 53,1 & 34,3 & 66,9 & 45,0 & 33,1 & 47,9 \\
		SDM-IDF-0         & 55,1 & 54,5 & 33,5 & 67,1 & 45,9 & 32,0 & 48,0 \\
		SDM-IDF-0,2       & 55,0 & 54,4 & 33,6 & 67,0 & 45,4 & 32,3 & 47,9 \\
		SDM-IDF-1         & 55,5 & 53,4 & 33,1 & 67,3 & 44,7 & 33,2 & 47,9 \\
		NGM-0             & 56,1 & 56,6 & 34,7 & 68,2 & 46,0 & 33,6 & 49,2 \\
		NGM-0,2           & 56,2 & 56,2 & 34,9 & 68,3 & 45,8 & 33,9 & 49,2 \\
		NGM-1             & 56,0 & 55,7 & 35,3 & 67,8 & 45,3 & 33,8 & 49,0 \\
		NGM-IDF-0         & 55,7 & 56,4 & 34,4 & 68,0 & 45,5 & 33,5 & 48,9 \\
		NGM-IDF-0,2       & 56,0 & 56,5 & 34,7 & 68,1 & 45,4 & 33,8 & 49,1 \\
		NGM-IDF-1         & 55,8 & 55,6 & 34,9 & 67,8 & 45,7 & 34,7 & 49,1 \\
		DL-0              & 57,0 & 55,6 & 35,2 & 68,7 & 46,1 & 33,9 & 49,4 \\
		DL-0,2            & 56,6 & 55,6 & 35,0 & 68,4 & 45,7 & 33,8 & 49,2 \\
		DL-1              & 55,9 & 55,2 & 35,5 & 68,0 & 46,6 & 34,7 & 49,3 \\
		DL-IDF-0          & 56,1 & 55,5 & 34,5 & 68,1 & 46,1 & 33,5 & 49,0 \\
		DL-IDF-0,2        & 56,3 & 55,4 & 34,6 & 68,0 & 46,9 & 34,1 & 49,2 \\
		DL-IDF-1          & 55,9 & 55,2 & 35,4 & 68,0 & 46,7 & 34,8 & 49,3 \\ \hline
		\hline
		\textit{test-0.6} &      &      &      &      &      &      &      \\ \hline
		SDM-0             & 64,5 & 62,5 & 50,1 & 75,7 & 53,6 & 60,8 & 61,2 \\
		SDM-0,2           & 64,1 & 62,5 & 50,1 & 75,8 & 53,9 & 60,6 & 61,2 \\
		SDM-1             & 63,8 & 61,9 & 50,1 & 74,8 & 52,3 & 61,2 & 60,7 \\
		SDM-IDF-0         & 64,9 & 61,6 & 49,4 & 76,0 & 52,8 & 59,9 & 60,8 \\
		SDM-IDF-0,2       & 64,7 & 61,9 & 49,2 & 75,8 & 53,6 & 60,2 & 60,9 \\
		SDM-IDF-1         & 64,7 & 61,8 & 50,0 & 75,5 & 52,0 & 61,2 & 60,9 \\
		NGM-0             & 64,2 & 63,6 & 51,0 & 76,7 & 53,0 & 61,1 & 61,6 \\
		NGM-0,2           & 64,9 & 63,7 & 51,2 & 76,8 & 53,2 & 61,5 & 61,9 \\
		NGM-1             & 64,8 & 63,8 & 51,2 & 76,5 & 52,5 & 62,9 & 61,9 \\
		NGM-IDF-0         & 64,7 & 64,0 & 51,1 & 76,8 & 53,1 & 61,4 & 61,8 \\
		NGM-IDF-0,2       & 65,7 & 63,5 & 51,0 & 76,8 & 53,5 & 61,5 & 62,0 \\
		NGM-IDF-1         & 65,5 & 63,8 & 50,8 & 76,3 & 52,6 & 62,0 & 61,8 \\
		DL-0              & 65,4 & 65,5 & 50,6 & 77,1 & 54,2 & 62,0 & 62,5 \\
		DL-0,2            & 65,4 & 65,8 & 51,0 & 77,0 & 53,9 & 61,9 & 62,5 \\
		DL-1              & 64,9 & 65,1 & 51,2 & 76,8 & 53,3 & 62,4 & 62,3 \\
		DL-IDF-0          & 65,3 & 65,2 & 50,7 & 77,1 & 53,7 & 61,3 & 62,2 \\
		DL-IDF-0,2        & 65,7 & 64,9 & 51,0 & 76,9 & 53,9 & 61,2 & 62,3 \\
		DL-IDF-1          & 64,9 & 65,1 & 51,2 & 76,8 & 53,3 & 62,4 & 62,3 \\
	\end{tabular}
	\caption{
		\label{tab:all-bleu-scores}Score BLEU selon le score de recherche choisi, avec ou sans normalisation IDF, et $\lambda \in \{0; 0,2; 1\}$.}
\end{table}

\begin{table}[!ht]
	\centering
	\begin{tabular}{llllllll}
		$\lambda$         & ECB  & EME  & Epp  & JRC  & Ubu  & Wiki & moy  \\ \hline\hline
		\textit{test-0.4} &      &      &      &      &      &      &      \\ \hline
		0                 & 57,0 & 55,6 & 35,2 & 68,7 & 46,1 & 33,9 & 49,4 \\
		0,1               & 56,4 & 55,3 & 34,7 & 68,4 & 46,3 & 33,7 & 49,1 \\
		0,2               & 56,6 & 55,6 & 35,0 & 68,4 & 45,7 & 33,8 & 49,2 \\
		0,3               & 56,5 & 55,4 & 35,0 & 68,6 & 45,9 & 33,6 & 49,2 \\
		0,4               & 56,9 & 55,5 & 34,8 & 68,2 & 45,6 & 33,9 & 49,1 \\
		0,5               & 57,0 & 56,0 & 34,5 & 68,4 & 46,0 & 34,5 & 49,4 \\
		0,6               & 57,1 & 55,8 & 34,7 & 68,2 & 45,7 & 34,5 & 49,3 \\
		0,7               & 56,8 & 55,6 & 35,1 & 68,2 & 45,5 & 34,4 & 49,3 \\
		0,8               & 56,6 & 55,2 & 35,0 & 68,2 & 45,0 & 34,3 & 49,1 \\
		0,9               & 56,5 & 55,4 & 35,2 & 68,1 & 45,2 & 34,2 & 49,1 \\
		1                 & 55,9 & 55,2 & 35,5 & 68,0 & 46,6 & 34,7 & 49,3 \\
		\hline
		\hline
		\textit{test-0.6} &      &      &      &      &      &      &      \\ \hline
		0                 & 65,4 & 65,5 & 50,6 & 77,1 & 54,2 & 62,0 & 62,5 \\
		0,1               & 65,5 & 65,4 & 50,8 & 77,0 & 53,8 & 61,8 & 62,4 \\
		0,2               & 65,4 & 65,8 & 51,0 & 77,0 & 53,9 & 61,9 & 62,5 \\
		0,3               & 65,6 & 65,5 & 51,2 & 77,1 & 53,8 & 62,0 & 62,5 \\
		0,4               & 65,9 & 65,3 & 51,2 & 77,1 & 54,2 & 62,1 & 62,6 \\
		0,5               & 66,0 & 65,2 & 51,2 & 77,2 & 54,4 & 62,1 & 62,7 \\
		0,6               & 66,0 & 65,0 & 51,1 & 77,1 & 54,2 & 62,4 & 62,6 \\
		0,7               & 65,7 & 64,7 & 51,1 & 76,9 & 54,1 & 62,3 & 62,5 \\
		0,8               & 65,6 & 65,1 & 50,8 & 76,8 & 53,7 & 62,4 & 62,4 \\
		0,9               & 65,5 & 65,2 & 50,8 & 76,8 & 53,5 & 62,4 & 62,3 \\
		1                 & 64,9 & 65,1 & 51,2 & 76,8 & 53,3 & 62,4 & 62,3 \\
	\end{tabular}
	\caption{
		\label{tab:all-lambda-bleu-scores}Score BLEU selon la valeur de $\lambda$ pour DL avec normalisation cardinale.}
\end{table}
\end{document}